
\documentclass[runningheads]{llncs}
\usepackage[dvipsnames,table]{xcolor}
\usepackage{graphicx}
\usepackage{comment}
\usepackage{amsmath,amssymb} 
\usepackage{color}


\usepackage{epsfig}
\usepackage{breakcites}

\usepackage{array,multirow}

\usepackage{xcolor}
\usepackage{xspace}

\usepackage{algorithmic}

\usepackage[utf8]{inputenc}
\usepackage[english]{babel}

\usepackage{amsthm}
\usepackage{amsmath} 

\usepackage{collcell}

\usepackage{subfig}

\usepackage{booktabs}
\usepackage{siunitx}
\usepackage{algorithm}

\usepackage{xr}
\usepackage[toc,page]{appendix}

\begin{document}
\pagestyle{headings}
\mainmatter
\def\ECCVSubNumber{3047}  

\title{Multitask Learning Strengthens\\Adversarial Robustness}

%
\newcommand*\samethanks[1][\value{footnote}]{\footnotemark[#1]}

\author{Chengzhi Mao \and
Amogh Gupta\thanks{Equal Contribution. Order is alphabetical.} \and
Vikram Nitin\samethanks \and \\
Baishakhi Ray  \and 
Shuran Song \and
Junfeng Yang  \and
Carl Vondrick }

%
\authorrunning{C. Mao et al.}
%
\institute{Columbia University, New York, NY, USA \\ \email{mcz,rayb,shurans,junfeng,vondrick@cs.columbia.edu,
ag4202,vikram.nitin@columbia.edu}}

\maketitle

\begin{abstract}
Although deep networks achieve strong accuracy on a range of computer vision benchmarks, they remain vulnerable to adversarial attacks, where imperceptible input perturbations fool the network. We present both theoretical and empirical analyses that connect the adversarial robustness of a model to the number of tasks that it is trained on. Experiments on two datasets show that attack difficulty increases as the number of target tasks increase. Moreover, our results suggest that when models are trained on multiple tasks at once, they become more robust to adversarial attacks on individual tasks. While adversarial defense remains an open challenge, our results suggest that deep networks are vulnerable partly because they are trained on too few tasks.

\keywords{Multi-task Learning, Adversarial Robustness}
\end{abstract}

\def\Blue{\color{blue}}
\def\Purple{\color{purple}}
$\newcommand{\Cov}{\mathrm{Cov}}$

\def\A{{\bf A}}
\def\a{{\bf a}}
\def\B{{\bf B}}
\def\b{{\bf b}}
\def\C{{\bf C}}
\def\c{{\bf c}}
\def\D{{\bf D}}
\def\d{{\bf d}}
\def\E{{\bf E}}
\def\e{{\bf e}}
\def\f{{\bf f}}
\def\F{{\bf F}}
\def\K{{\bf K}}
\def\k{{\bf k}}
\def\L{{\bf L}}
\def\H{{\bf H}}
\def\h{{\bf h}}
\def\G{{\bf G}}
\def\g{{\bf g}}
\def\I{{\bf I}}
\def\R{{\bf R}}
\def\X{{\bf X}}
\def\Y{{\bf Y}}
\def\OO{{\bf O}}
\def\oo{{\bf o}}
\def\P{{\bf P}}
\def\Q{{\bf Q}}
\def\r{{\bf r}}
\def\s{{\bf s}}
\def\S{{\bf S}}
\def\t{{\bf t}}
\def\T{{\bf T}}
\def\x{{\bf x}}
\def\y{{\bf y}}
\def\z{{\bf z}}
\def\Z{{\bf Z}}
\def\M{{\bf M}}
\def\m{{\bf m}}
\def\n{{\bf n}}
\def\U{{\bf U}}
\def\u{{\bf u}}
\def\V{{\bf V}}
\def\v{{\bf v}}
\def\W{{\bf W}}
\def\w{{\bf w}}
\def\0{{\bf 0}}
\def\1{{\bf 1}}
\def\N{{\bf N}}

\def\AM{{\mathcal A}}
\def\EM{{\mathcal E}}
\def\FM{{\mathcal F}}
\def\TM{{\mathcal T}}
\def\UM{{\mathcal U}}
\def\XM{{\mathcal X}}
\def\YM{{\mathcal Y}}
\def\NM{{\mathcal N}}
\def\OM{{\mathcal O}}
\def\IM{{\mathcal I}}
\def\GM{{\mathcal G}}
\def\PM{{\mathcal P}}
\def\LM{{\mathcal L}}
\def\MM{{\mathcal M}}
\def\DM{{\mathcal D}}
\def\SM{{\mathcal S}}
\def\RB{{\mathbb R}}
\def\EB{{\mathbb E}}

\def\tx{\tilde{\bf x}}
\def\ty{\tilde{\bf y}}
\def\tz{\tilde{\bf z}}
\def\hd{\hat{d}}
\def\HD{\hat{\bf D}}
\def\hx{\hat{\bf x}}
\def\hR{\hat{R}}

\def\Ome{\mbox{\boldmath$\omega$\unboldmath}}
\def\bet{\mbox{\boldmath$\beta$\unboldmath}}
\def\et{\mbox{\boldmath$\eta$\unboldmath}}
\def\ep{\mbox{\boldmath$\epsilon$\unboldmath}}
\def\ph{\mbox{\boldmath$\phi$\unboldmath}}
\def\Pii{\mbox{\boldmath$\Pi$\unboldmath}}
\def\pii{\mbox{\boldmath$\pi$\unboldmath}}
\def\Ph{\mbox{\boldmath$\Phi$\unboldmath}}
\def\Ps{\mbox{\boldmath$\Psi$\unboldmath}}
\def\pss{\mbox{\boldmath$\psi$\unboldmath}}
\def\tha{\mbox{\boldmath$\theta$\unboldmath}}
\def\Tha{\mbox{\boldmath$\Theta$\unboldmath}}
\def\muu{\mbox{\boldmath$\mu$\unboldmath}}
\def\Si{\mbox{\boldmath$\Sigma$\unboldmath}}
\def\Gam{\mbox{\boldmath$\Gamma$\unboldmath}}
\def\gamm{\mbox{\boldmath$\gamma$\unboldmath}}
\def\Lam{\mbox{\boldmath$\Lambda$\unboldmath}}
\def\De{\mbox{\boldmath$\Delta$\unboldmath}}
\def\vps{\mbox{\boldmath$\varepsilon$\unboldmath}}
\def\Up{\mbox{\boldmath$\Upsilon$\unboldmath}}
\def\Lap{\mbox{\boldmath$\LM$\unboldmath}}
\newcommand{\ti}[1]{\tilde{#1}}

\def\tr{\mathrm{tr}}
\def\etr{\mathrm{etr}}
\def\etal{{\em et al.\/}\,}
\newcommand{\indep}{{\;\bot\!\!\!\!\!\!\bot\;}}
\def\argmax{\mathop{\rm argmax}}
\def\argmin{\mathop{\rm argmin}}
\def\vec{\text{vec}}
\def\cov{\text{cov}}
\def\dg{\text{diag}}

\newcommand{\cityscape}{Cityscapes\xspace}
\newcommand{\taskonomy}{Taskonomy\xspace}
\newcommand{\multitask}{multitask\xspace}
\newcommand{\Multitask}{Multitask\xspace}

\newcommand{\cz}[1]{{\scriptsize \todo{Chengzi:  \blue{ #1}}}}
\newcommand{\jf}[1]{{\scriptsize \todo{Junfeng:  \blue{ #1}}}}
\newcommand{\bray}[1]{{\scriptsize \todo{Bray:  {\Red #1}}}}
\newcommand{\amogh}[1]{{\scriptsize \todo{Amogh:  {\Red #1}}}}

\section{Introduction} \label{sec:intro}

\begin{figure}[t]
	\centering
	\includegraphics[width=1\textwidth]{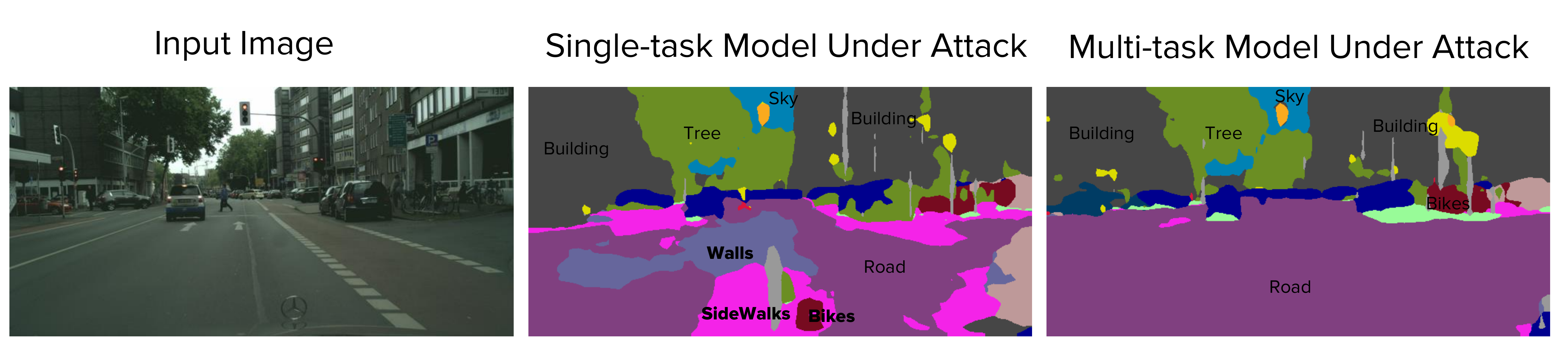}
	\caption{We find that \multitask models are more robust against adversarial attacks. Training a model to solve multiple tasks improves the robustness when one task is attacked. The middle and right column show predictions for single-task and \multitask models when one task is adversarially attacked.}
	\label{fig:intro}
\end{figure}

Deep networks obtain high performance in many computer vision tasks \cite{ResNet, Yu2017,FCN,depth}, yet they remain brittle to adversarial examples. A large body of work has demonstrated that images with human-imperceptible noise \cite{madry,cw,mim,JSMA} can be crafted to cause the model to mispredict. This pervasiveness of adversarial examples exposes key limitations of deep networks, and hampers their deployment in safety-critical applications, such as autonomous driving.

A growing body of research has been dedicated to answering what causes deep networks to be fragile to adversarial examples and how to improve robustness \cite{first-order,notbugbutfeature,TLA,madry,ALP,feature_denoising,ensemble,DefenseGAN,parseval,intriguing}. The investigations center around two factors: the training data and the optimization procedure. For instance, more training data -- both labeled and unlabeled -- improves robustness \cite{moredata,unlabeled}. It has been theoretically shown that decreasing the input dimensionality of data improves robustness \cite{first-order}. Adversarial training \cite{madry} improves robustness by dynamically augmenting the training data using generated adversarial examples. Similarly, optimization procedures that regularize the learning specifically with robustness losses have been proposed \cite{deepdefense,parseval}. This body of work suggests that the fragility of deep networks may stem from the training data and optimization procedure.

In this paper, we pursue a new line of investigation: how learning on multiple tasks affects adversarial robustness.
While previous work shows that \multitask learning can improve the performance of specific tasks \cite{multi-task-caruana, multi-task-onomy}, we show that it increases robustness too. See Figure \ref{fig:intro}. Unlike prior work that trades off performance between natural and adversarial examples \cite{robustness_vs_accuracy}, our work improves adversarial robustness while also maintaining performance on natural examples. 

Using the first order vulnerability of neural networks \cite{first-order}, we theoretically show that increasing output dimensionality -- treating each output dimension as an individual task -- improves the robustness of the entire model. Perturbations needed to attack multiple output dimensions cancel each other out. We formally quantify and upper bound how much robustness a \multitask model gains against a \multitask attack with increasing output dimensionality.

We further empirically show that \multitask learning improves the model robustness for two classes of attack: both when a single task is attacked or several tasks are simultaneously attacked.
We experiment with up to 11 vision tasks on two natural image datasets, \cityscape~\cite{Cityscapes} and \taskonomy~\cite{taskonomy}.
When all tasks are under attack, \multitask learning increases segmentation robustness by up to 7 points and reduces the error of other tasks up to 60\% over baselines.
We compare the robustness of a model trained for a main task with and without an auxiliary task. Results show that, when the main task is under attack, \multitask learning improves segmentation overlap by up to 6 points and reduces the error of the other tasks by up to 23\%. Moreover, \multitask training is a complementary defense to adversarial training, and it improves both the clean and adversarial performance of the state-of-the-art, adversarially trained, single-task  models. Code is available at \url{https://github.com/columbia/MTRobust}. 

Overall, our experiments show that \multitask learning improves adversarial robustness while maintaining most of the the state-of-the-art single-task model performance. While defending against adversarial attacks remains an open problem, our results suggest that current deep networks are vulnerable partly because they are trained for too few tasks.

\section{Related Work}

We briefly review related work in \multitask learning and adversarial attacks. 

\textbf{\Multitask Learning:} 
\Multitask learning \cite{multi-task-caruana,regularized_MTL,MTL_self_sup,TaskGroup_MTL, multi-task-onomy} aims to solve several tasks at once, and has been used to learn better models for semantic segmentation~\cite{att-e2e-mtl}, depth estimation~\cite{upnet}, key-point prediction~\cite{keypoints}, and object detection~\cite{SSD}. It is hypothesized that \multitask learning improves the performance of select tasks by introducing a knowledge-based inductive bias \cite{multi-task-caruana}. However, multi-objective functions are hard to optimize, where researchers design architectures~\cite{onemodelforall, ubernet,task-expert, cross-stitch, liu2019multitask} and optimization procedures~\cite{gradnormMTL, mgda, sener2018multitask, gradientsurgery} for learning better \multitask models. Our work complements this body of work by linking  \multitask learning to adversarial robustness.

\textbf{Adversarial Attacks:}
Current adversarial attacks manipulate the input \cite{intriguing,mim,costales2020live, RobustSeg, houdini, shen2019advspade, DAG, Metzen_2017} to fool target models. While attacking single output models~\cite{corr/GoodfellowSS14,BIM} is straightforward, Arnab et. al.~\cite{RobustSeg} empirically shows the inherent hardness of attacking segmentation models with dense output. Theoretical insight of this robustness gain, however, is missing in the literature. While past theoretical work showed the hardness of multi-objective optimization \cite{hardmultiobject, Multitask-influence}, we leverage this motivation and prove that \multitask models are robust when tasks are simultaneously attacked. Our work contributes both theoretical and empirical insights on adversarial attacks through the lens of \multitask learning.

\textbf{Adversarial Robustness:}
Adversarial training improves models' robustness against attacks, where the training data is augmented using adversarial samples \cite{corr/GoodfellowSS14, madry}. In combination with adversarial training, later works \cite{ALP,TLA,TRADES, feature_denoising} achieve improved robustness by regularizing the feature representations with additional loss, which can be viewed as adding additional tasks. Despite the improvement of robustness, adversarially trained models lose significant accuracy on clean (unperturbed) examples \cite{madry,TRADES, robustness_vs_accuracy}. Moreover, generating adversarial samples slows down training several-fold, which makes it hard to scale adversarial training to large datasets.

Past work revealed that model robustness is strongly connected to the gradient of the input, where models' robustness is improved by regularizing the gradient norm \cite{regularize-input-grad,deepdefense,parseval}. Parseval \cite{parseval} regularizes the Lipschitz constant---the maximum norm of gradient---of the neural network to produce a robust classifier, but it fails in the presence of batch-normalization layers. \cite{regularize-input-grad} decreases the input gradients norm. These methods can improve the model's robustness without compromising clean accuracy. Simon-Gabriel et al. \cite{first-order} conducted a theoretical analysis of the vulnerability of neural network classifiers, and connected gradient norm and adversarial robustness.
Our method enhances robustness by training a \multitask model, which complements both adversarial training \cite{madry, TRADES} and existing regularization methods \cite{PED,regularize-input-grad,deepdefense,parseval}.

\section{Adversarial Setting} \label{sec:adv_setting}

The goal of an adversary is to ``fool'' the target model by adding human-imperceptible perturbations to its input. We focus on untargeted attacks, which are harder to defend against than targeted attacks \cite{EvalALP}.
We classify adversarial attacks for a \multitask prediction model into two categories: adversarial attacks that fool more than one task at once (\multitask attacks), and adversarial attacks that fool a specific task (single-task attacks).

\subsection{\Multitask Learning Objective}

\noindent
\textbf{Notations.} Let $\x$ denote an input example, and $\y_c$ denote the corresponding ground-truth label for task $c$. In this work, we focus on \multitask learning with shared parameters \cite{ubernet,multi-task-onomy, luong2015multitaskshare, Liu_2019_MTLshare, lee2019MTLgeneralization}, where all the tasks share the same ``backbone network'' $F(\cdot)$ as a feature extractor with task-specific decoder networks $D_c(\cdot)$. 
The task-specific loss is formulated as:
\begin{equation}
\mathcal{L}_c(\x, \y_c) = \ell(D_c(F(\x)), \y_c),
\end{equation}
\noindent where $\ell$ is any appropriate loss function. For simplicity, we denote $(\y_1, ..., \y_M)$ as $\overline{\y}$, where $M$ is the number of tasks. The total loss for \multitask learning is a weighted sum of all the individual losses:
\begin{equation}\label{sec:joint}
\mathcal{L}_{all}(\x, \overline{\y}) = \sum_{c=1}^{M}{\lambda_c \mathcal{L}_c(\x, \y_c)}
\end{equation}
\noindent For the simplicity of theoretical analysis, we set $\lambda_c=\frac{1}{M}$ for all $c=1,..., M$, such that $\sum_{c=1}^M \lambda_c = 1$. In our experiments on real-world datasets, we will adjust the $\lambda_c$ accordingly, following standard practice \cite{multi-task-onomy, lee2019MTLgeneralization}.

\subsection{Adversarial \Multitask Attack Objective}
The goal of a \multitask attack is to change multiple output predictions together. For example, to fool an autonomous driving model, the attacker may need to deceive both the object classification and depth estimation tasks.  Moreover, if we regard each output pixel of a semantic segmentation task as an individual task, adversarial attacks on segmentation models need to flip multiple output pixels, so we consider them as \multitask attacks. We also consider other dense output tasks as a variant of \multitask, such as depth estimation, keypoints estimation, and texture prediction.

In general, given an input example $\x$, the objective function for \multitask attacks against models with multiple outputs is the following:
\begin{equation}
\begin{split}
\argmax_{\x_{adv}} \mathcal{L}_{all}(\x_{adv}, \overline{\y}) \quad 
\text{s.t.} \quad ||\x_{adv}-\x||_p \leq r
\end{split}
\end{equation}
where the attacker aims to maximize the joint loss function by finding small perturbations within a $p$-norm bounded distance $r$ of the input example. Intuitively, a \multitask attack is not easy to perform because the attacker needs to optimize the perturbation to fool each individual task simultaneously. The robustness of the overall model can be a useful property - for instance, consider an autonomous-driving model trained for both classification and depth estimation. If either of the two tasks is attacked, the other can still be relied on to prevent accidents.

\subsection{Adversarial Single-Task Attack Objective}
In contrast to a \multitask attack, a single-task attack focuses on a selected target task. Compared with attacking all tasks at once, this type of attack is more effective for the target task, since the perturbation can be designed solely for this task without being limited by other considerations. It is another realistic type of attack because some tasks are more important than the others for the attacker. For example, if the attacker successfully subverts the color prediction for a traffic light, the attacker may cause an accident even if the other tasks predict correctly.
The objective function for single-task attack is formulated as:
\begin{equation}
\begin{split}
\argmax_{\x_{adv}} \mathcal{L}_c(\x_{adv}, \y_c),    \text{s.t.} ||\x_{adv} - \x||_p \leq r
\end{split}
\end{equation}{}
\noindent For any given task, this single-task attack is more effective than jointly attacking the other tasks. We will empirically demonstrate that \multitask learning also improves model robustness against this type of attack in Section \ref{sec:eval}.

\section{Theoretical Analysis}
\label{sec:theory}

We present theoretical insights into the robustness of \multitask models. A prevalent formulation of \multitask learning work uses shared backbone network with task-specific branches \cite{luong2015multitaskshare, Liu_2019_MTLshare, lee2019MTLgeneralization}. We denote the \multitask predictor as $F$ and each individual task predictor as $F_c$. Prior work \cite{first-order} showed that the norm of gradients captures the vulnerability of the model. We thus measure the \multitask models' vulnerability with the same metric. Since we are working with deep networks, we assume all the functions here are differentiable.  Details of all proofs are in the supplementary material.

\begin{definition}
Given  classifier $F$, input $\x$, output target $\y$, and loss $\mathcal{L}(\x, \y) = \ell(F(\x), \y)$, the feasible adversarial examples lie in a $p$-norm bounded ball with radius $r$, $B(\x, r) := \{\x_{adv}, ||\x_{adv}-\x||_p<r\}$. Then adversarial vulnerability of a classifier over the whole dataset is 
\[\mathbb{E}_{\x} [\Delta \mathcal{L}(\x, \y, r)] = \mathbb{E}_{\x} [\max_{||\delta||_p < r}{|\mathcal{L}(\x,\y) - \mathcal{L}(\x+\delta,\y)|}]\]
\end{definition}



\noindent $\Delta \mathcal{L}$ captures the maximum change of the output loss from arbitrary input change $\delta$ inside the $p$-norm ball. Intuitively, a robust model should have smaller change of the loss given any perturbation of the input. Given the adversarial noise is imperceptible, i.e., $r \rightarrow 0$, we can approximate $\Delta \mathcal{L}$ with a first-order Taylor expansion \cite{first-order}. 

\begin{lemma}
For a given neural network $F$ that predicts multiple tasks, the adversarial vulnerability is
\[\mathbb{E}_{\x} [\Delta \mathcal{L}(\x, \y, r)] \approx  \mathbb{E}_{\x}\left[||\partial_{\x} \mathcal{L}_{all}(\x, \overline{\y})||_q \right] \cdot ||\delta||_p \propto \mathbb{E}_{\x}\left[||\partial_{\x} \mathcal{L}_{all}(\x, \overline{\y})||_q \right] \]
\end{lemma}

\noindent where $q$ is the dual norm of $p$, which satisfies $\frac{1}{p} + \frac{1}{q} = 1$ and $1 \leq p \leq \infty$. Without loss of generality, let $p=2$ and $q=2$. Note that from equation \ref{sec:joint}, we get $\mathcal{L}_{all}(\x, \overline{\y}) = \sum_{c=1}^{M}{\frac{1}{M} \mathcal{L}_c(\x, \y_c)}$. Thus we get the following equation:
\begin{equation}\label{equ:lamma_remark}
\begin{split}
    \partial_{\x} \mathcal{L}_{all}(\x, \overline{\y}) = \partial_{\x} \sum_{c=1}^{M}{\frac{1}{M} \mathcal{L}_c(\x, \y_c) } 
     = \frac{1}{M} \sum_{c=1}^{M}{\partial_{\x}  \mathcal{L}_c(\x, \y_c) }
\end{split}
\end{equation}
We denote the gradient for task $c$ as $\r_c$, i.e., $\r_c = \partial_{\x}\mathcal{L}_c(\x, \y_c)$. We propose the following theory for robustness of different numbers of randomly selected tasks.
\begin{theorem}\label{th:2}
\textbf{(Adversarial Vulnerability of Model for Multiple Correlated Tasks)}
If the selected output tasks are correlated with each other such that the covariance between the gradient of task $i$ and task $j$ is $\mathrm{Cov}(\r_i, \r_j)$, and the gradient for each task is i.i.d. with zero mean (because the model is converged), then adversarial vulnerability of the given model is proportional to 
\[\sqrt{\frac{1 + \frac{2}{M} \sum_{i=1}^{M} \sum_{j=1}^{i-1} \frac{\mathrm{Cov}(\r_i, \r_j)}{\mathrm{Cov}(\r_i, \r_i)}}{M}}\]
where $M$ is the number of output tasks selected.
\end{theorem}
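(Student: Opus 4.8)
The plan is to carry the conclusion of the Lemma through the decomposition $\partial_{\x}\mathcal{L}_{all} = \frac{1}{M}\sum_{c=1}^{M}\r_c$ from \eqref{equ:lamma_remark}, expand the resulting squared norm, take expectations term by term, and identify the diagonal and off-diagonal contributions with the per-task variances and cross-covariances. Concretely, I would first write
\[
\bigl\|\partial_{\x}\mathcal{L}_{all}(\x,\overline{\y})\bigr\|_2^2
= \frac{1}{M^2}\Bigl\|\sum_{c=1}^{M}\r_c\Bigr\|_2^2
= \frac{1}{M^2}\sum_{i=1}^{M}\sum_{j=1}^{M}\r_i^{\top}\r_j ,
\]
so that, taking expectations over $\x$ and separating $i=j$ from $i\neq j$,
\[
\mathbb{E}_{\x}\bigl[\|\partial_{\x}\mathcal{L}_{all}\|_2^2\bigr]
= \frac{1}{M^2}\Bigl(\sum_{i=1}^{M}\mathbb{E}_{\x}[\|\r_i\|_2^2]
+ 2\sum_{i=1}^{M}\sum_{j=1}^{i-1}\mathbb{E}_{\x}[\r_i^{\top}\r_j]\Bigr).
\]

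Next I would invoke the two modeling hypotheses of the theorem. Because the network has converged, each task gradient $\r_c$ has zero mean, so $\mathbb{E}_{\x}[\r_i^{\top}\r_j]$ is exactly the covariance $\mathrm{Cov}(\r_i,\r_j)$ (for vector-valued $\r_c$ this is the trace of the cross-covariance operator, which is what the scalar notation of the statement abbreviates). Because the per-task gradients are identically distributed, $\mathbb{E}_{\x}[\|\r_i\|_2^2] = \mathrm{Cov}(\r_i,\r_i)$ is a single common constant $\sigma^2$, independent of $i$ and of $M$. Substituting and factoring out $\sigma^2/M$ yields
\[
\mathbb{E}_{\x}\bigl[\|\partial_{\x}\mathcal{L}_{all}\|_2^2\bigr]
= \frac{\sigma^2}{M}\left(1 + \frac{2}{M}\sum_{i=1}^{M}\sum_{j=1}^{i-1}\frac{\mathrm{Cov}(\r_i,\r_j)}{\mathrm{Cov}(\r_i,\r_i)}\right).
\]

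Finally I would return to the vulnerability. By the Lemma it is proportional to $\mathbb{E}_{\x}[\|\partial_{\x}\mathcal{L}_{all}\|_2]$, and by Jensen's inequality $\mathbb{E}_{\x}[\|\partial_{\x}\mathcal{L}_{all}\|_2] \le \sqrt{\mathbb{E}_{\x}[\|\partial_{\x}\mathcal{L}_{all}\|_2^2]}$, with equality up to an $M$-independent constant whenever the norm of the aggregated gradient concentrates about its mean — a mild condition given the high output dimensionality of the dense-prediction tasks we study. Since $\sigma$ depends only on an individual task's gradient statistics and not on $M$, it is absorbed into the proportionality constant, leaving the vulnerability proportional to $\sqrt{\bigl(1 + \tfrac{2}{M}\sum_{i}\sum_{j<i}\mathrm{Cov}(\r_i,\r_j)/\mathrm{Cov}(\r_i,\r_i)\bigr)/M}$, as claimed.

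I expect the passage from $\mathbb{E}\|v\|$ to $\sqrt{\mathbb{E}\|v\|^2}$ to be the main obstacle: it is an identity only up to the concentration (or Gaussianity) caveat above, and making that caveat precise — rather than the term-by-term expectation bookkeeping, which is routine — is where the real modeling content sits. A secondary point to handle carefully is the scalar-versus-operator reading of $\mathrm{Cov}(\r_i,\r_j)$ when $\r_c$ is a high-dimensional input gradient, which I would reconcile by taking the scalar $\mathrm{Cov}$ in the statement to denote the trace of the covariance operator, so that $\mathbb{E}_{\x}[\r_i^{\top}\r_j]=\mathrm{Cov}(\r_i,\r_j)$ holds literally.
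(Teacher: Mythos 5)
Your proposal follows essentially the same route as the paper's own proof: expand $\|\frac{1}{M}\sum_c \r_c\|_2^2$, separate diagonal from off-diagonal terms, use the zero-mean assumption to identify $\mathbb{E}[\r_i^{\top}\r_j]$ with $\mathrm{Cov}(\r_i,\r_j)$, and absorb the common variance $\sigma^2$ into the proportionality constant. In fact you are more careful than the paper at the one delicate step --- the paper silently replaces $\mathbb{E}_{\x}[\|\partial_{\x}\mathcal{L}_{all}\|_2]$ by $\sqrt{\mathbb{E}_{\x}[\|\partial_{\x}\mathcal{L}_{all}\|_2^2]}$, whereas you correctly flag this as only a Jensen inequality plus a concentration assumption, and you also make explicit the trace-of-covariance reading needed for vector-valued gradients.
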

The idea is that when we select more tasks as attack targets, the gradients for each of the individual tasks on average cancels out with each other. We define the joint gradient vector $\R$ as follows:
\begin{equation*}
    \R = \partial_{\x} \mathcal{L}_{all}(\x, \overline{\y}) = \frac{1}{M} \sum_{c=1}^{M}{\partial_{\x}\mathcal{L}_c(\x, \y_c)}
\end{equation*}
The joint gradient is the sum of gradients from each individual task. We then obtain the expectation of the $L_2$ norm of the joint gradient:
\begin{align*}
    &\mathbb{E}(\Vert \R \rVert_2^2) = \mathbb{E}\left[\lVert\frac{1}{M} \sum_{i=1}^M{\r_i}\rVert_2^2\right]
    = \frac{1}{M^2}\mathbb{E}\left[\sum_{i=1}^M{\lVert \r_i \rVert^2} + 2 \sum_{i=1}^M{\sum_{j=1}^i{\r_i \r_j}}\right] \\
    &= \frac{1}{M^2}\left(\sum_{i=1}^M\mathbb{E}[\mathrm{Cov}(\r_i, \r_i)] + 2 \sum_{i=1}^M{\sum_{j=1}^i\mathbb{E}[\mathrm{Cov}(\r_i, \r_j)]}\right) 
\end{align*}
\noindent The last equation holds due to the $\0$ mean assumption of the gradient. For further details of the proof, please see the supplementary material.


\begin{corollary}\label{th:1}
\textbf{(Adversarial Vulnerability of Model for Multiple Independent Tasks)}
 If the output tasks selected are independent of each other, and the gradient for each task is i.i.d. with zero mean, then the adversarial vulnerability of given model is proportional to $\frac{1}{\sqrt{M}}$, where $M$ is the number of independent output tasks selected. 
\end{corollary}
Based on the independence assumption, all covariances becomes zero. Thus  Theorem \ref{th:1} can be simplified as:
\begin{equation}
\begin{split}
    \mathbb{E}[\lVert \partial_{\x} \mathcal{L}_{all}(\x, \overline{\y})\rVert_2^2] = \mathbb{E}(\lVert \R \rVert_2^2) 
    = \frac{1}{M}\mathbb{E}{\lVert \r_i \rVert^2} = \frac{\sigma^2}{M} \propto \frac{1}{M}
    \end{split}{}
\end{equation}{}
\begin{remark}
By increasing the number of output tasks $M$, the first order vulnerability~\cite{first-order} of network decreases. In the ideal case, if the model has an infinite number of uncorrelated tasks, then it is impossible to find an adversarial examples that fools all the tasks.
\end{remark}
\begin{remark}
Theorem 1 studies the robustness for multiple \textbf{correlated} tasks, which is true for most computer vision tasks. The independent tasks assumption in Corollary 1 is a simplified, idealistic instance of Theorem 1 that upper-bounds the robustness of models under \multitask attacks. Together, Theorem 1 and Corollary 1.1 demonstrate that unless the tasks are 100\% correlated (the same task), multiple tasks together are more robust than each individual one. 
\end{remark}
Our theoretical analysis shows that more outputs, especially if they are less correlated, improve the model's robustness against \multitask attacks. Past work shows that segmentation is inherently robust \cite{RobustSeg, houdini} compared to classification. Our analysis provides a formal explanation to this inherent robustness because a segmentation model can be viewed as a \multitask model (one task per pixel).


\section{Experiments} \label{sec:eval}

We validate our analysis with empirical results on the \cityscape and the \taskonomy datasets. We evaluate the robustness of \multitask models against two types of attack: \multitask attack (Section \ref{sec:multi-attack}) and single-task attacks (Section \ref{sec:single-attack}).  We also conduct \multitask learning experiments on adversarial training and show that they are complementary (Section \ref{sec:adv-train}).

\subsection{Datasets}
\textbf{\cityscape.} The \cityscape dataset \cite{Cityscapes} consists of images of urban driving scenes. We study three tasks: semantic segmentation, depth estimation, and image reconstruction. We use the full resolution ($2048 \times 1024$) for analyzing pre-trained state-of-the-art models. We resize the image to $680 \times 340$ to train our single task (baseline) and \multitask models.\footnote{We use the same dimension for baselines and ours during comparison because input dimension impacts robustness \cite{first-order}.}

\begin{figure}[t!]
	\centering
	\includegraphics[width=1\textwidth]{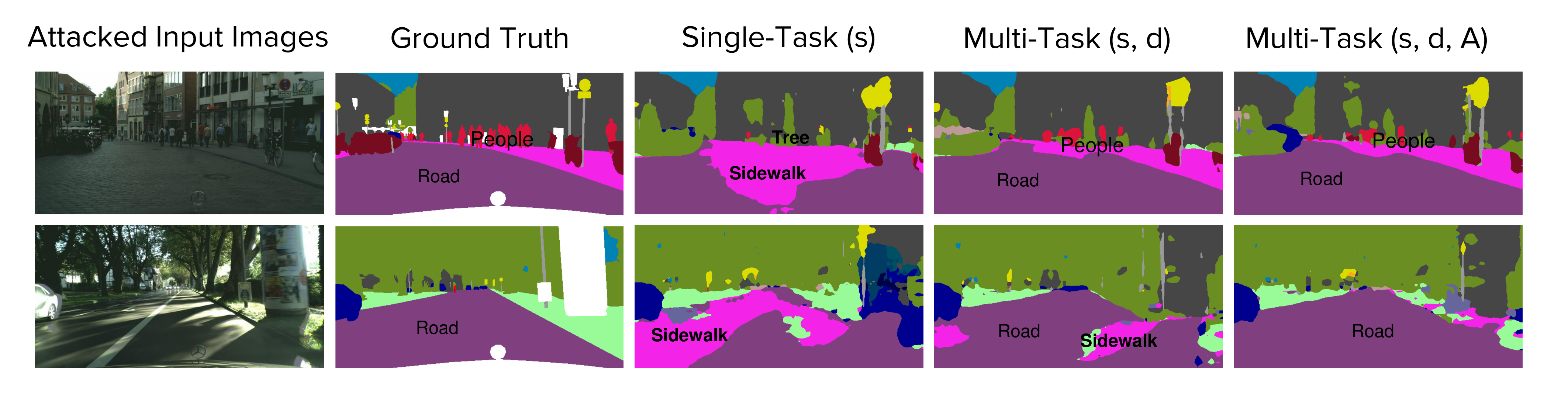}
	\vspace{-3em}
	\caption{We show model predictions on Cityscapes under \multitask attack. The single-task segmentation model misclassifies the `road' as `sidewalk' under attack, while the \multitask model can still segment it correctly. The \multitask models are more robust than the single-task trained model.}
	\label{fig:city-visualization-multi-attck}
\end{figure} 

\begin{figure*}[th]
	\centering
	\includegraphics[width=1\textwidth]{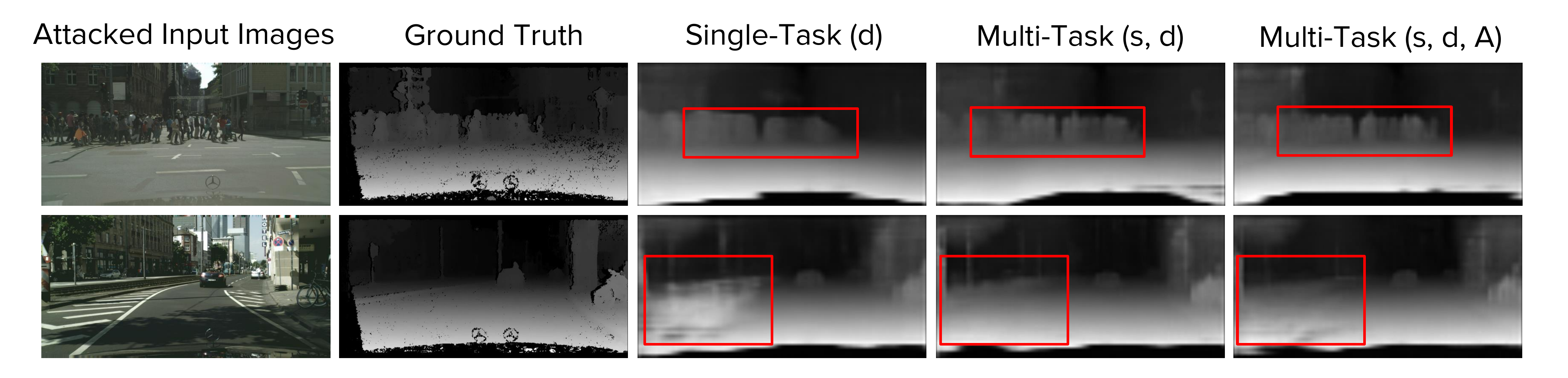}
	\caption{We show depth predictions of \multitask models under \multitask attacks. To emphasize the differences, we annotated the figure with red boxes where the errors are particularly noticeable. The \multitask trained model outperforms the single-task trained model under attack.}
	\label{fig:city-visualization-multi-attck-depth}
	\vspace{-5mm}
\end{figure*}

\noindent \textbf{\taskonomy.} The \taskonomy dataset \cite{taskonomy} consists of images of indoor scenes. We train on up to 11 tasks: semantic segmentation (s), depth euclidean estimation (D), depth zbuffer estimation (d), normal (n), edge texture (e), edge occlusion (E), keypoints 2D (k), keypoints 3D (K), principal curvature (p), reshading (r), and image reconstruction (A). We use the ``tiny'' version of their dataset splits \cite{Taskbank-split}. We resize the images to $256 \times 256$. 


\subsection{Attack Methods}


We evaluate the model robustness with $L_{\infty}$ bounded adversarial attacks, which is a standard evaluation metric for adversarial robustness \cite{madry}. We evaluate with four different attacks:

\textbf{FGSM:} We evaluate on the Fast Gradient Sign Method (FGSM) \cite{corr/GoodfellowSS14}, which generates adversarial examples $\x_{adv}$ by $\x_{adv}=\x+\epsilon \cdot \textrm{sign}(\nabla_\x \ell(F(\x),y))$. It is a single step, non-iterative attack.

\textbf{PGD:}   Following the attack setup for segmentation in \cite{RobustSeg},  we use the widely used attack PGD (Iteratively FGSM with random start \cite{madry}), set the number of iterations of attacks to $\min(\epsilon + 4, \lceil{1.25\epsilon}\rceil)$ and step-size $\alpha=1$. We choose the $L_{\infty}$ bound $\epsilon$ from $\{1, 2, 4, 8, 16\}$ where noise is almost imperceptible. Under $\epsilon=4$, we also evaluate the robustness using PGD attacks with $\{10, 20, 50, 100\}$ steps, which is a stronger attack compared to 5 steps attack used in \cite{RobustSeg}.

\textbf{MIM:} We also evaluate on MIM attack \cite{mim}, which adds momentum to iterative attacks to escape local minima and won the NeurIPS 2017 Adversarial Attack Competition.


\textbf{Houdini:}
We evaluate the semantic segmentation task with the state-of-the-art Houdini attack \cite{houdini}, which directly attacks the evaluation metric, such as the non-differentiable mIoU criterion (mean Intersection over Union).

We do not use the DAG~\cite{DAG} attack for segmentation because it is an unrestricted attack without controlling $L_{\infty}$ bound. For all the iterative attacks, the step size is 1.

\begin{figure}[t]
	
	\centering
	\subfloat[Vulnerability (Dim)]{\label{subfig:AdvDim}\includegraphics[width=0.3\textwidth, trim={0cm 0 0cm 0},clip]{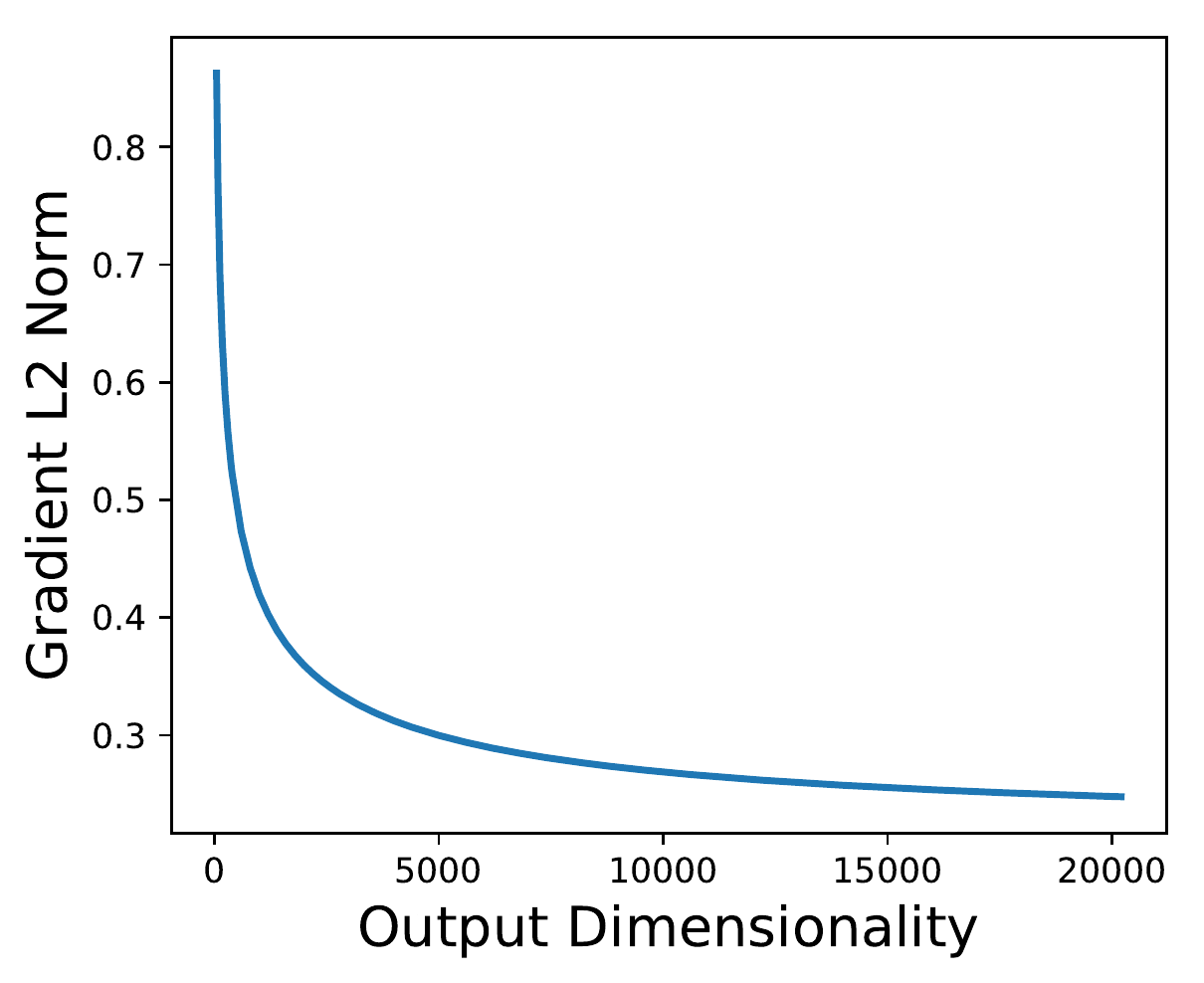}}
	\subfloat[Robust Accuracy]{\label{subfig:AccDim}\includegraphics[width=0.3\textwidth, trim={0cm 0 0cm 0},clip]{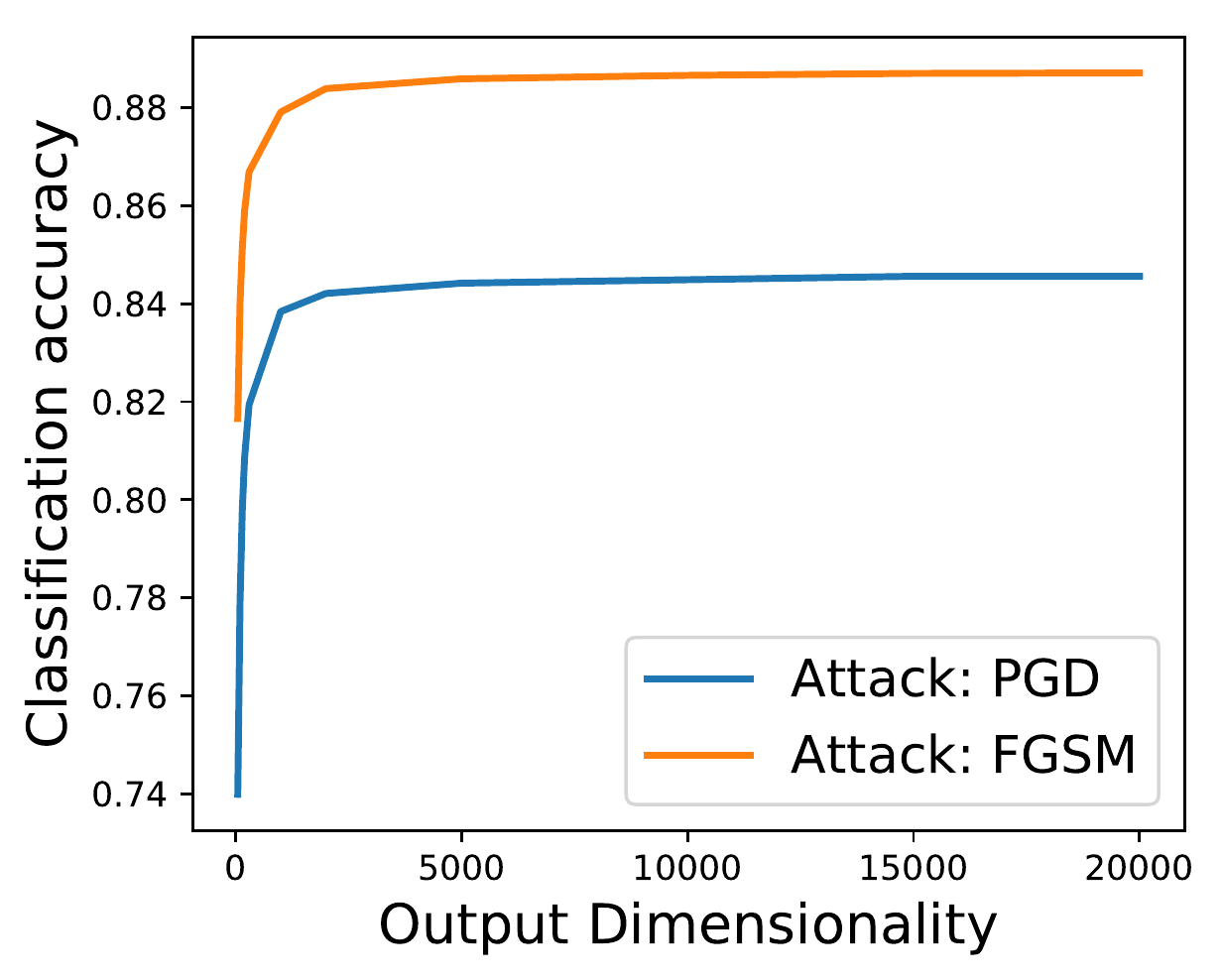}}
	\subfloat[Vulnerability (Task)]{\label{fig:taskonomy_grad_bar}\includegraphics[width=0.3\textwidth]{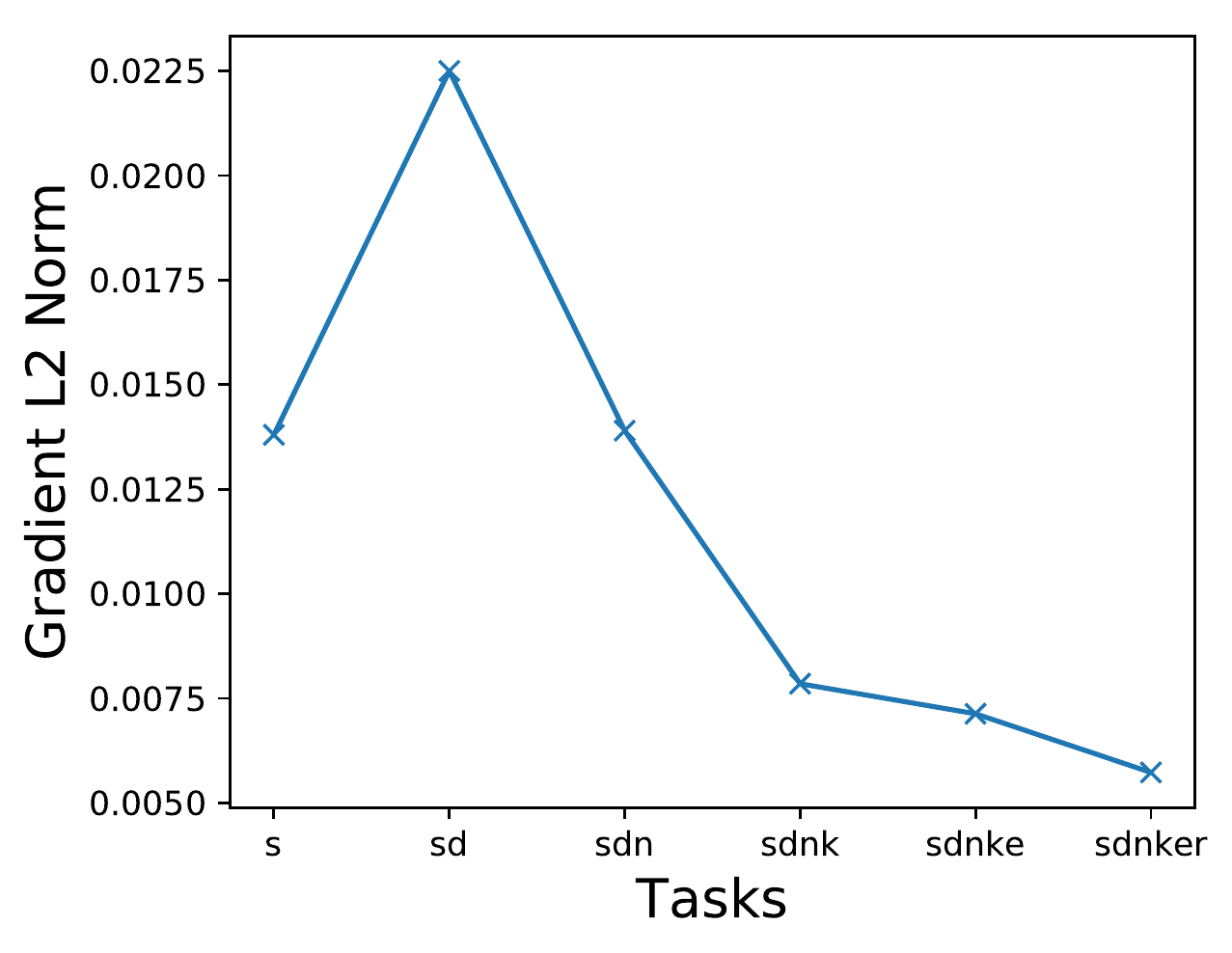}}
	
	\caption{\small{The effect of output dimensionality and number of tasks on adversarial robustness. We analyzed the pre-trained DRN model on \cityscape (a,b), and a \multitask model trained on \taskonomy (c).  The x-axis of (a,b) represents the output dimensionality, the x-axis of (c) shows the combination of multiple tasks. The y-axis of (a,c) is the L2 norm of the joint gradient and is proportional to the model's adversarial vulnerability. The y-axis of (b) is classification accuracy. The robust performances for (c) are shown in Fig \ref{fig:multiattack-taskonomy}. Increasing the output dimensionality or number of tasks improves the model's robustness.}}
	\label{fig:city-drn-output-dim}
	\vspace{-1mm}
\end{figure}

\subsection{\Multitask Models Against \Multitask Attack}
\label{sec:multi-attack}

\textbf{High Output Dimensionality as \Multitask.} Our experiment first studies the effect of a higher number of output dimensions on adversarial robustness. As an example, we use semantic segmentation. The experiment uses a pre-trained Dilated Residual Network (DRN-105) \cite{Yu2016, Yu2017} model on the \cityscape  dataset. To obtain the given output dimensionality, we randomly select a subset of pixels from the model output. We mitigate the randomness of the sampling by averaging the results over 20 random samples. Random sampling is a general dimension reduction method, which preserves the correlation and structure for high dimensional, structured data \cite{MC}.  Figure \ref{subfig:AdvDim} shows that the model's vulnerability (as measured by the norm of the gradients) decreases as the number of output dimension increases, which validates Theorem \ref{th:1}. 


\begin{figure}[h!]
	\centering
	\vspace{-5mm}
	\subfloat[Segmentation mIoU $\uparrow$]{\label{aasxs}\includegraphics[width=0.35\textwidth, trim={0cm 0cm 0cm 0cm},clip]{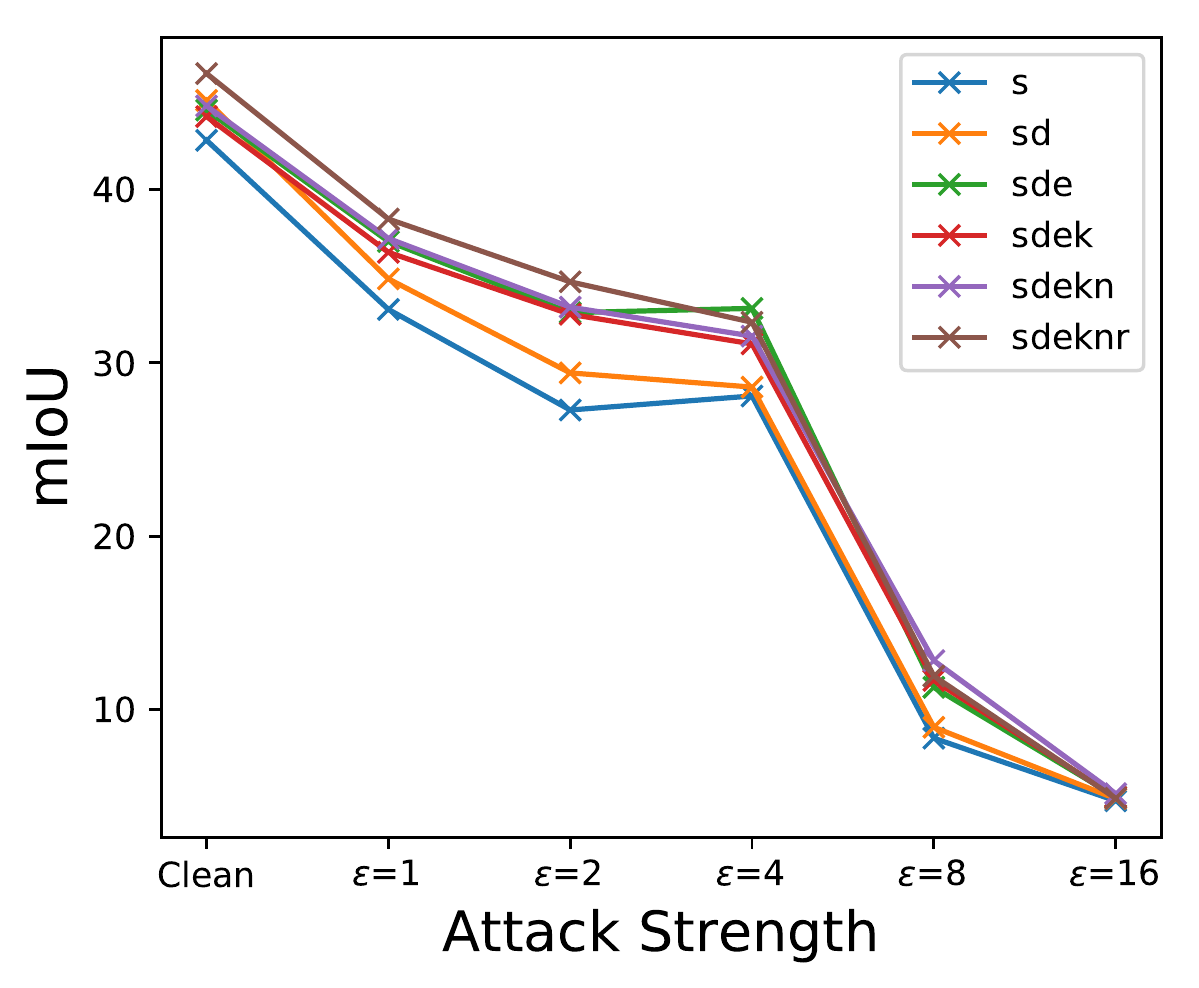}}
	\subfloat[Depth Abs Error $\downarrow$]{\label{aasxs}\includegraphics[width=0.35\textwidth, trim={0cm 0 0cm 0},clip]{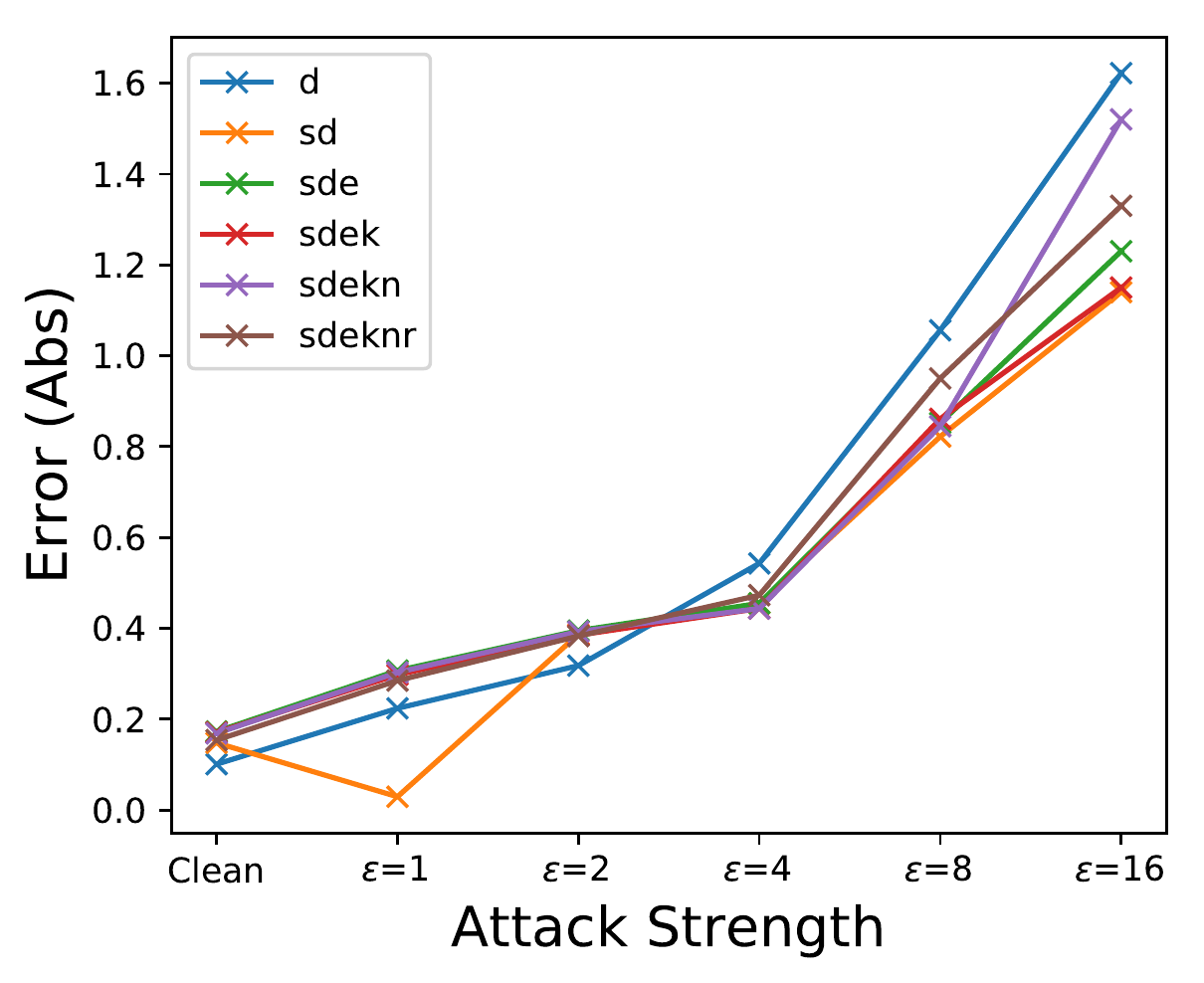}}
	\\
	\subfloat[Edge Detection MSE $\downarrow$]{\label{aasxs}\includegraphics[width=0.35\textwidth, trim={0cm 0 0cm 0},clip]{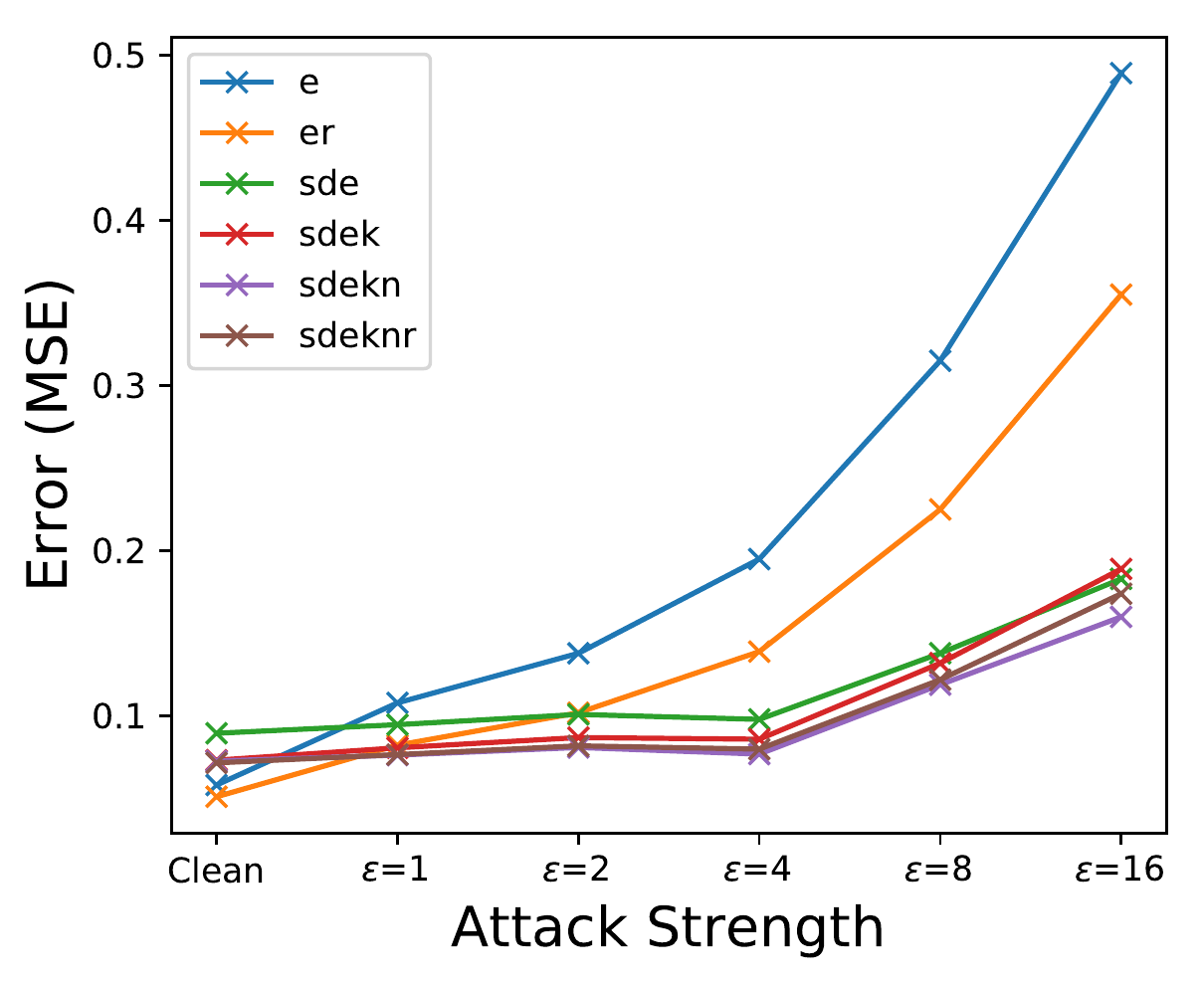}}
	\subfloat[Keypoints MSE $\downarrow$]{\label{aasxs}\includegraphics[width=0.35\textwidth, trim={0cm 0 0cm 0},clip]{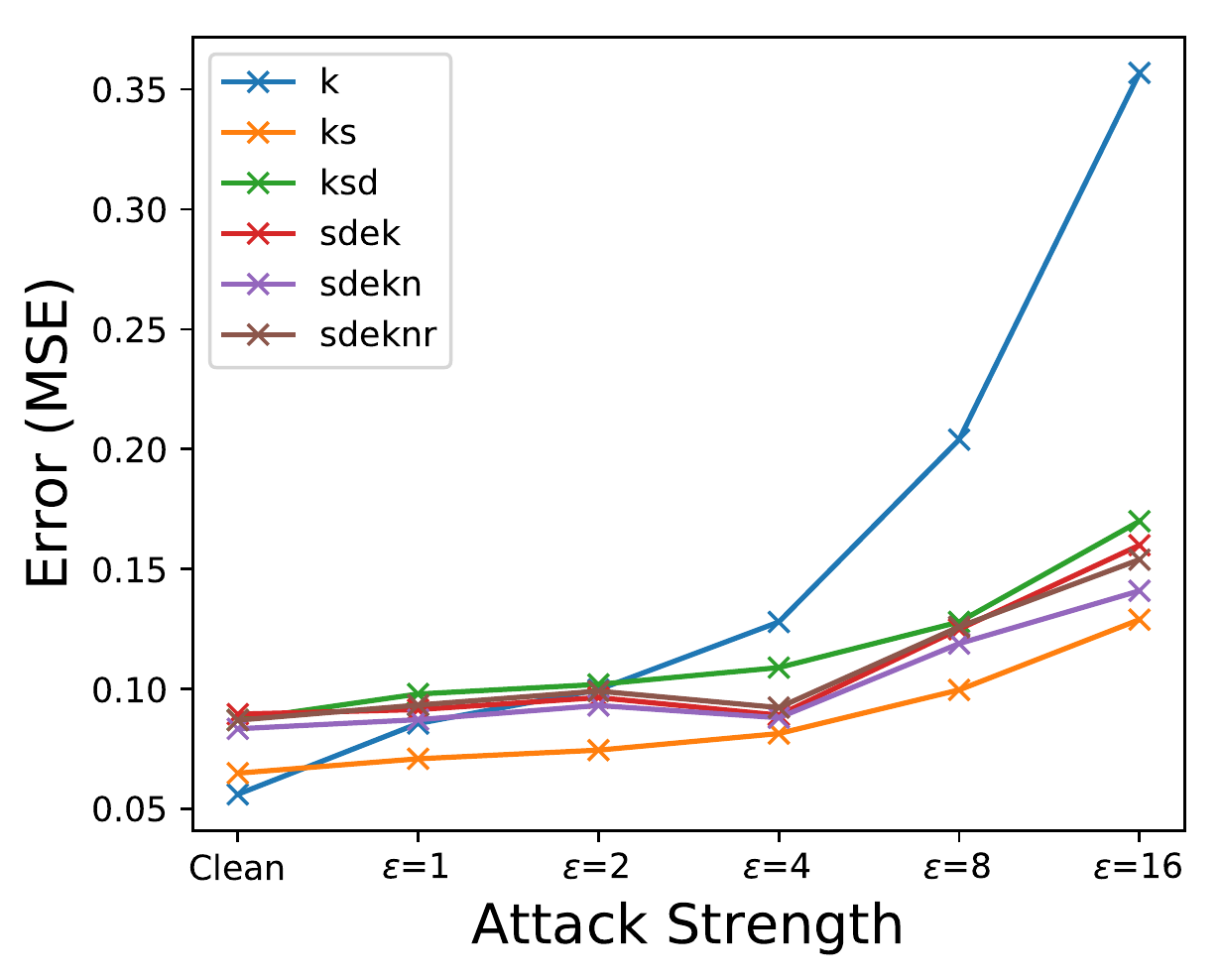}}
	
	\caption{\small{Adversarial robustness against \multitask attack on \taskonomy dataset. The x-axis is the attack strength, ranging from no attack (clean) to the strongest attack ($\epsilon=16$ PGD). For each subfigure, the y-axis shows the performance of one task under \multitask attack. $\uparrow$ means the higher, the better.  $\downarrow$ means the lower, the better. The \multitask model names are in the legend, we refer to the task by their initials, e.g., `sde' means the model is trained on segmentation, depth, and edge simultaneously. The \textcolor{blue}{blue}  line is the single-task baseline performance, the other lines are \multitask performance. The figures show that it is hard to attack all the tasks in a \multitask model simultaneously. Thus \multitask models are more robust against \multitask attacks.}}
	\label{fig:multiattack-taskonomy}
	\vspace{-4mm}
\end{figure}


Besides the norm of gradient, we measure the performance under FGSM \cite{corr/GoodfellowSS14} and PGD \cite{madry} adversarial attacks, and show that it improves as output dimensionality increases (Figure \ref{subfig:AccDim}). Notice when few pixels are selected, the robustness gains are faster. This is because with fewer pixels: (1) the marginal gain of the inverse function is larger; and (2) the select pixels are sparse and tend to be far away and uncorrelated to each other. The correlation between the output pixels compounds as more nearby pixels are selected, which slows down the improvements to robustness. The results demonstrate that models with higher output dimension/diversity are inherently more robust against adversarial attacks, consistent with the observation in \cite{RobustSeg, houdini} and our Theorem \ref{th:1}.


\noindent \textbf{Number of Tasks.} We now consider the case where the number of tasks increases, which is a second factor that increases output dimensionality. We evaluate the robustness of \multitask models on the \cityscape and \taskonomy datasets. We equally train all the tasks with the shared backbone architecture mentioned in Section \ref{sec:adv_setting}. On \cityscape, we use DRN-105 model as the architecture for encoder and decoder; on \taskonomy, we use Resnet-18 \cite{taskonomy}. Each task has its own decoder. For the \cityscape dataset, we start with training only the semantic segmentation task, then add the depth estimation and input reconstruction task. For the \taskonomy dataset, following the setup in \cite{multi-task-onomy}, we start with only semantic segmentation, and add depth estimation, normal, keypoints 2D, edge texture, and reshading tasks to the model one by one.  In our figures and tables, we refer to these tasks by the task's first letter. 

Figure \ref{fig:taskonomy_grad_bar} shows the L2 norm of the joint gradient for many tasks, which measures the adversarial vulnerability. Overall, as we add more tasks, the norm of the joint gradient decreases, indicating improvement to robustness \cite{first-order}. The only exception is the depth estimation task, which we believe is due to the large range of values (0 to $+\infty$) that its outputs take. Empirically, a larger output range leads to a larger loss, which implies a larger gradient value.



\begin{figure}[t]
	\centering
	\includegraphics[width=1\textwidth]{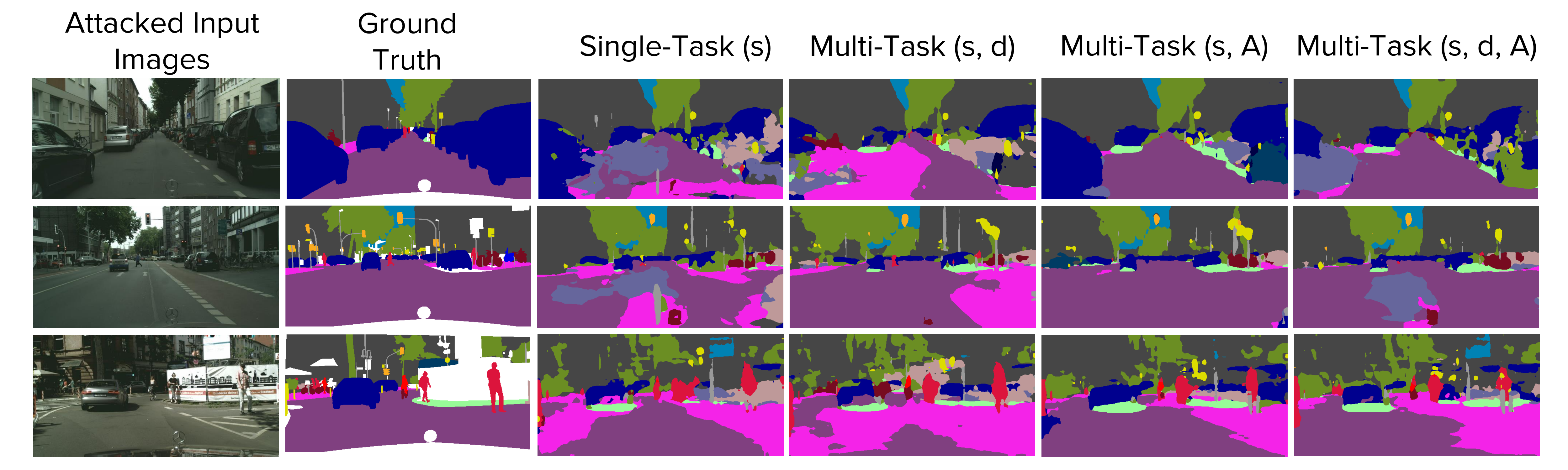}
	\caption{ \small{Performance of single-task attack for \multitask models trained on Cityscapes. We show segmentation under attack for single-task and three \multitask models. The \multitask trained model out-performs the single-task trained model.}}
	\label{fig:city-visualization-single-attack}
	\vspace{-5mm}
\end{figure}

\begin{table}[b]
	\begin{center}
		\label{exp:city-sig}
		\scriptsize
		\centering
		\begin{tabular}{l|c|cc}
			\toprule
			& Baseline  & \multicolumn{2}{c}{\Multitask}\\
			Training Tasks & s   & sd  & sdA \\
			\midrule  
			Clean SemSeg $\uparrow$ & 44.77 & \textbf{46.53} & 45.82 \\
			PGD  SemSeg $\uparrow$ & 15.75 & 16.01  & \textbf{16.36} \\
			\bottomrule
		\end{tabular}
		\quad
		\begin{tabular}{l|c|cc}
			\toprule
			& Baseline & \multicolumn{2}{c}{\Multitask}\\
			Training Tasks   & d  & sd  & sdA \\
			\midrule  
			Clean Depth $\downarrow$  & 1.82 &  \textbf{1.780} & \underline{1.96} \\
			PGD Depth $\downarrow$ & 6.81  & 6.08 & \textbf{5.81}\\
			
			\bottomrule
		\end{tabular}
	\end{center}
	\caption{\small{The models' performances under \multitask PGD attack and clean images on \cityscape using DRN-D-105 \cite{Yu2017} . The \textbf{bold} demonstrate the better performance for each row, \underline{underline} shows inferior results of \multitask learning. The results show that \multitask models are overall more robust under \multitask attack.
	}} \label{tab:city-multi-attack-all}
	
\end{table}

We additionally measure the robust performance on different \multitask models under \multitask attacks. Following the setup in \cite{RobustSeg}, we enumerate the $\epsilon$ of the $L_{\infty}$ attack from 1 to 16. Figure \ref{fig:multiattack-taskonomy} shows the robustness of \multitask models using \taskonomy, where the adversarial robustness of \multitask models are better than single-task models, even if the clean performance of \multitask models may be lower. 
We also observe some tasks gain more robustness compared to other tasks when they are attacked together, which suggests some tasks are inherently harder to attack. Overall, the attacker cannot simultaneously attack all the tasks successfully, which results in improved overall robustness of \multitask models. In Table \ref{tab:city-multi-attack-all}, we observe the same improvement on \cityscape. Qualitative results are shown in Figure \ref{fig:city-visualization-multi-attck} and Figure \ref{fig:city-visualization-multi-attck-depth}. Please see the supplemental material for additional results. 



\begin{table}[ht]
	\begin{center}
		\label{exp:city-sig}
		\scriptsize
		\centering
		\begin{tabular}{cl|c|ccc|c|ccc}
			\toprule
			& & \multicolumn{4}{c|}{SemSeg mIoU Score  $\uparrow$} & \multicolumn{4}{c}{Depth Abs Error $\downarrow$} \\
			\midrule  
			& & Baseline & \multicolumn{3}{c|}{\Multitask Learning}  & Baseline & \multicolumn{3}{c}{\Multitask Learning}\\
			& Training Tasks $\xrightarrow{}$ & s & sd & sA & sdA & d  & ds  & dA  & dAs \\
			& $\lambda_a$ &  &  0.001 & 0.001 & 0.001 & & 0.1 & 0.1 & 0.01 \\
			\midrule 
			&Clean & 48.58 & 48.61 & \textbf{49.61} & \underline{48.19} &1.799 & \textbf{1.792} &  \underline{1.823} & 1.798 \\
			\midrule
			\parbox[t]{3mm}{\multirow{7}{*}{\rotatebox[origin=c]{90}{Attacks}}}
			&FGSM & 26.35 &	\underline{26.28} &	\textbf{26.79} &	26.71  &3.16 &	3.01&	\textbf{3.00} &	3.24 \\
			
			
			&PGD10  & 13.04&	13.64&	\textbf{14.76}	&14.48  &6.96&	6.15&	\textbf{6.03}&	6.59 \\
			&PGD20& 11.41&	11.98&\textbf{12.79}&	12.73 &8.81&	\textbf{7.70}	&7.64&	8.38 \\
			&PGD50  &  10.49&	10.95&	11.68 & \textbf{11.86} &10.23&	\textbf{9.07}&	9.12&	9.81 \\
			&PGD100  &  10.15&	10.51&	11.22& \textbf{11.52}  &10.8&	\textbf{9.69}&	9.74&	10.41 \\
			&MIM100 & 9.90 &	10.17&	10.93& \textbf{11.24} & 12.04	&\textbf{10.72}&	10.97&	11.69\\ 
			&Houdini100 & 5.04 &	5.14 &	\textbf{6.24} & 6.21 & - &- & -& -\\
			\bottomrule
		\end{tabular}
	\end{center}
	\caption{\small{Model's robust performance under $L_{\infty}=4$ bounded single-task attacks on \cityscape. Each column is a DRN-22 model trained on a different combination of tasks, where ``s,''``d,''and``A''denote segmentation, depth, and auto-encoder, respectively.  $\uparrow$ means the higher, the better. $\downarrow$ means the lower, the better. \textbf{Bold} in each row, shows the best performance under the same attack.  \Multitask learning models out-perform single-task models except for the \underline{underlined} ones. While nearly maintaining the performance on clean examples, \multitask models are consistently more robust under strong adversarial attacks. }
	} \label{tab:city-single-attack} 
\end{table}

\subsection{\Multitask Models Against Single-Task Attacks}
\label{sec:single-attack}

Following the setup for \multitask learning in \cite{luong2015multitaskshare, Liu_2019_MTLshare, lee2019MTLgeneralization}, we train the \multitask models using a main task and auxiliary tasks, where we use $\lambda=1$ for the main task and $\lambda_a$ for the auxiliary tasks. We then evaluate the robustness of the main task under single-task attacks. On \cityscape, the main and the auxiliary tasks share 16 layers of an encoding backbone network. The decoding network for each individual task has 6 layers. For all the models, we train for 200 epochs. For adversarial robustness evaluation, we use strong attacks including PGD100 and MIM100 for attacking the segmentation accuracy\footnote{Suffixed number indicates number of steps for attack}, and use 100 steps Houdini \cite{houdini} to attack the non-differentiable mIoU of the Segmentation model directly. We do not use Houdini to attack the depth because the L1 loss for depth is differentiable and does not need any surrogate loss.
The results in Table \ref{tab:city-single-attack} show that \multitask learning improves the segmentation mIoU by 1.2 points and the performance of depth estimation by 11\% under attack, while maintaining the performance on most of the clean examples. Qualitative results are in Figure \ref{fig:city-visualization-single-attack}.

\begin{figure}[h!]
	
	\centering
	\subfloat[Performance Under Attack]{\label{aasxs}\includegraphics[width=1.0\textwidth]{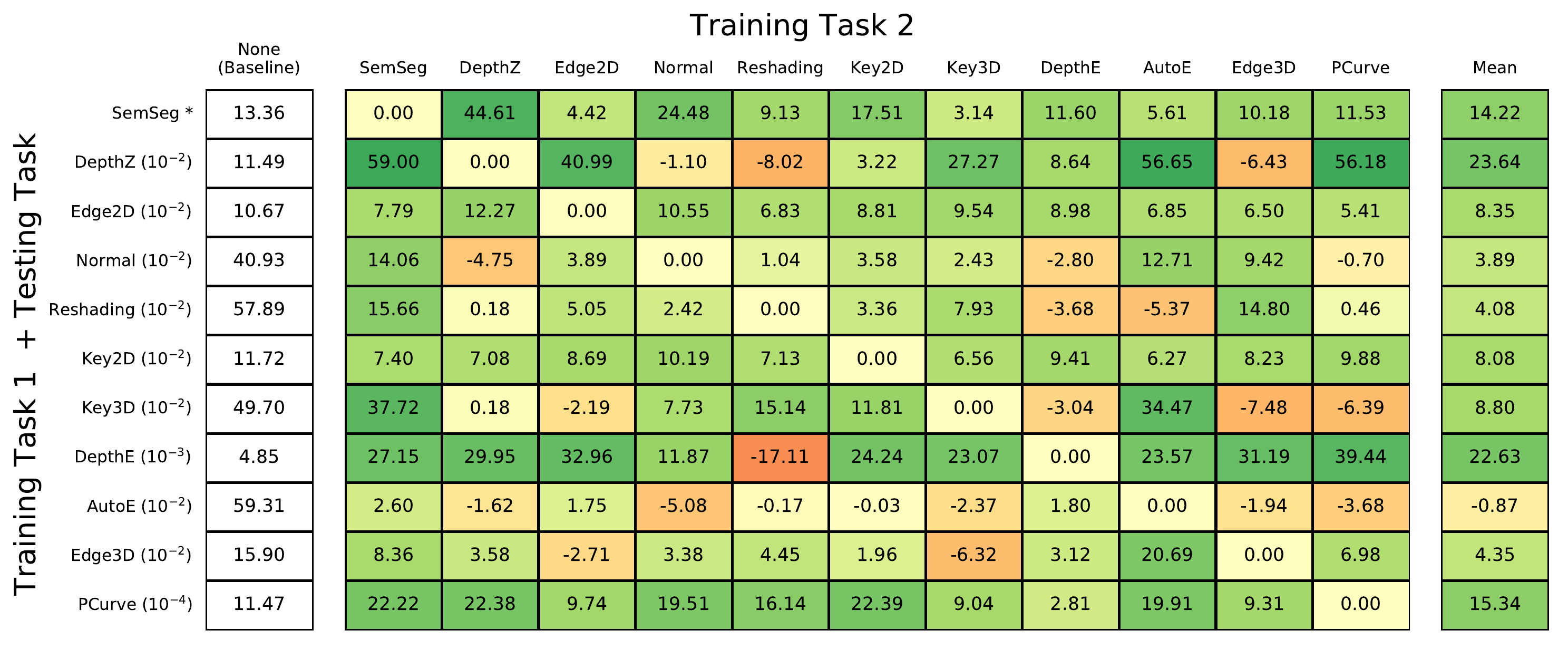}}
	\\
	\subfloat[Performance on Clean Examples]{\label{aasxs}\includegraphics[width=1.0\textwidth]{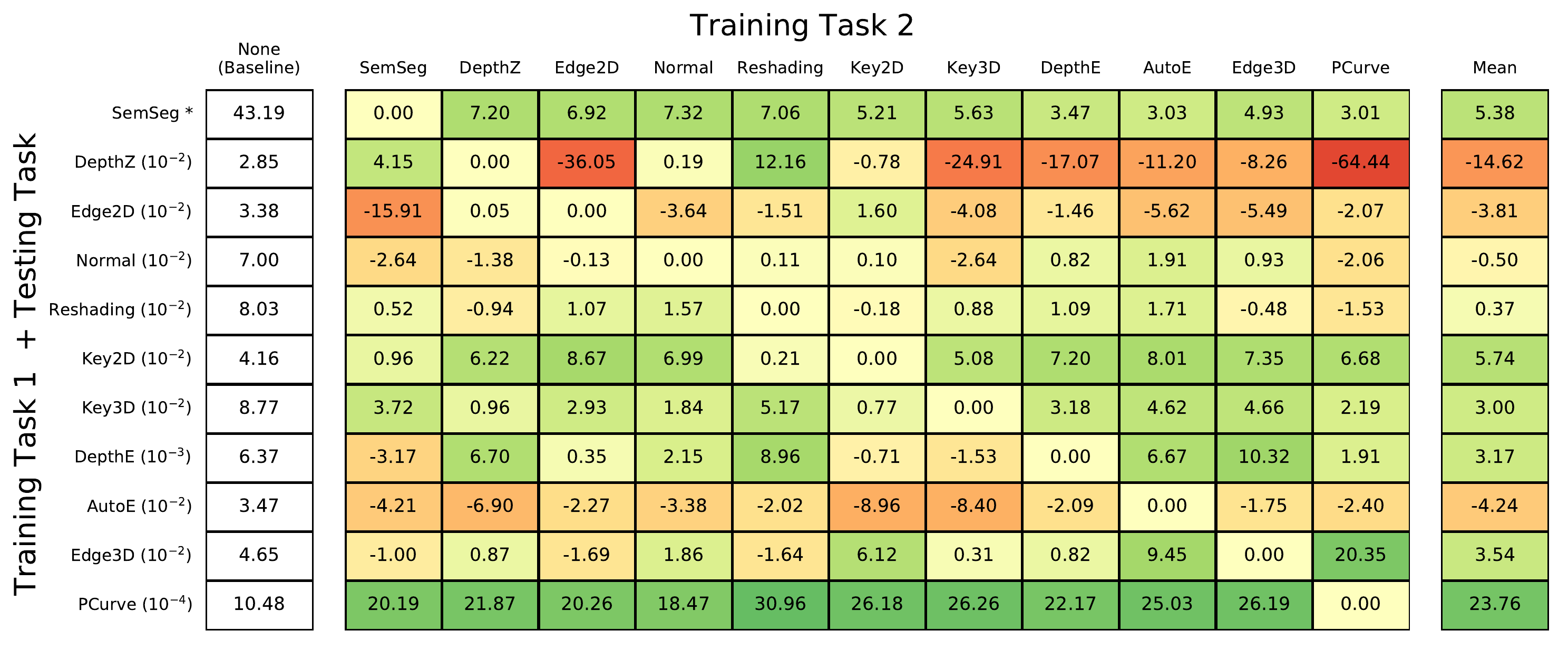}}
	\caption{\small{We consider models trained on two tasks. In each matrix, the rows show the first training task and the testing task. The columns show the auxiliary training task. The first column without color shows the absolute value for the baseline model (single-task).  The middle colored columns show the relative improvement of \multitask models over the single-task model in percentage. The last colored column shows the average relative improvement. We show results for both (a) adversarial and (b) clean performance. \Multitask learning improves the performance on clean examples for 70/110 cases, and the performance on adversarial examples for 90/110 cases. While \multitask training does not always improve clean performance, we show \multitask learning provides more gains for adversarial performance.}}
	\label{fig:attack-matrix-taskonomy}
	\vspace{-5mm}
\end{figure}

On the \taskonomy dataset, we conduct experiments on 11 tasks. Following the setup in \cite{multi-task-onomy}, we use ResNet-18 \cite{ResNet} as the shared encoding network, where each individual task has its own prediction network using the encoded representation. We train single-task models for each of the 11 tasks as baselines. We train a total of 110 \multitask models --- each main task combined with 10 different auxilliary tasks --- for 11 main tasks. We evaluate both the clean performance and adversarial performance. $\lambda_a$ is either 0.1 or 0.01 based on the tasks. We use PGD attacks bounded with $L_{\infty} = 4$ with 50 steps, where the step size is 1. The attack performance plateaus for more steps. Figure \ref{fig:attack-matrix-taskonomy} shows the performance of the main task on both clean and adversarial examples. While maintaining the performance on clean examples (average improvement of 4.7\%), \multitask learning improves 90/110 the models' performance under attacks, by an average of 10.23\% relative improvement. Our results show that one major advantage of \multitask learning, which to our knowledge is previously unknown, is that it improves the model's robustness under adversarial attacks.

\subsection{\Multitask Learning Complements Adversarial Training}
\label{sec:adv-train}

\begin{table}
\begin{center}
\label{exp:city-sig}
\scriptsize
    \centering
    \begin{tabular}{cl|c|ccc|c|ccc}
         \toprule
         & & \multicolumn{4}{c|}{SemSeg mIoU Score  $\uparrow$}  & \multicolumn{4}{c}{Depth Abs Error $\downarrow$} \\
         \midrule  
         & & Baseline & \multicolumn{3}{c|}{\Multitask Learning}   & Baseline & \multicolumn{3}{c}{\Multitask Learning}\\
         & Training Tasks $\xrightarrow{}$ & s & sd & sA & sdA  & d  & ds  & dA  & dAs \\
         \midrule  
         & Clean & 41.95 & 43.27 & \textbf{43.65} & 43.26 & 2.24 & \textbf{2.07} & 2.15 & 2.15\\
         \midrule
         \parbox[t]{3mm}{\multirow{4}{*}{\rotatebox[origin=c]{90}{Attacks}}}  

         & PGD50  & 19.73 &	\textbf{22.08} &	20.45 &	21.93 & 2.85&	\textbf{2.61} & 2.75 & 2.67 \\
         & PGD100  & 19.63 &	\textbf{21.96} &	20.31 &	21.83  & 2.85 &	\textbf{2.61} & 2.75 & 2.67  \\
         & MIM100 & 19.54 & \textbf{21.89} & 20.20 & 21.74 & 2.85 & \textbf{2.61} & 2.75 & 2.67\\ 
         & Houdini100 & 17.05 & \textbf{19.45} & 17.36 & 19.16 & - & - & - & -\\
         \bottomrule
    \end{tabular}
\end{center}
\caption{\small{Adversarial robustness of adversarial training models under $L_{\infty}=4$ bounded attacks on \cityscape. Each column is a model trained on a different combination of tasks. ``s,''``d,''and``A''denote segmentation, depth, and auto-encoder respectively.  $\uparrow$ indicates  the higher, the better. The $\downarrow$ indicates the lower, the better. \textbf{Bold} shows the best performance of the same task for each row. \Multitask learning improves both the clean performance and robustness upon single-task learning.  }
} \label{tab:city-adv-multi}
\vspace{-4mm}
\end{table}



We study whether \multitask learning helps adversarial robust training. We use DRN-22 on the \cityscape dataset, and train both single-task and \multitask models for 200 epoch under the same setup. The single-task model follows the standard adversarial training algorithm, where we train the model on the generated single-task (segmentation) adversarial attacks. For the \multitask adversarial training, we train it on the generated \multitask attack images for both semantic segmentation and the auxiliary task. Details are in the supplementary material. 
Table \ref{tab:city-adv-multi} shows that \multitask learning improves the robust performance of both clean examples and adversarial examples, where segmentation mIoU improves by 2.40 points and depth improves by 8.4\%.

\section{Conclusion}

The widening deployment of machine learning in real-world applications calls for versatile models that solve multiple tasks or produce high-dimensional outputs. Our theoretical analysis explains that versatile models are inherently more robust than models with fewer output dimensions.  Our experiments on real-world datasets and common computer vision tasks measure improvements in adversarial robustness under attacks.  Our work is the first to connect this vulnerability with \multitask learning and hint towards a new direction of research to understand and mitigate this fragility.

{
\textbf{Acknowledgements:} This work was in part supported by Amazon Research Award; NSF grant CNS-15-64055; NSF-CCF 1845893; NSF-IIS 1850069; ONR grants N00014-16-1- 2263 and N00014-17-1-2788; a JP Morgan Faculty Research Award; a DiDi Faculty Research Award; a Google Cloud grant; an Amazon Web Services grant. The authors thank Vaggelis Atlidakis, Augustine Cha, D\'idac Sur\'is, Lovish Chum, Justin Wong, and Shunhua Jiang for valuable comments.
}

\clearpage
%
%
\bibliographystyle{splncs04}
\bibliography{eccv2020submission}

\externaldocument[S-]{supplementary}

\newpage
\begin{subappendices}

\section{Proof for Theoretical Analysis}

 ~~~~~ We present theoretical analysis to quantify how much multi-task learning improves model's overall adversarial robustness.


\begin{definition}
Given  classifier $F$, input $\x$, output target $\y$, and loss $\mathcal{L}(\x, \y) = \ell(F(\x), \y)$, the feasible adversarial examples lie in a $p$-norm bounded ball with radius $r$, $B(\x, r) := \{\x_{adv}, ||\x_{adv}-\x||_p<r\}$. Then adversarial vulnerability of a classifier over the whole dataset is 
\[\mathbb{E}_{\x} [\Delta \mathcal{L}(\x, \y, r)] = \mathbb{E}_{\x} [\max_{||\delta||_p < r}{|\mathcal{L}(\x,\y) - \mathcal{L}(\x+\delta,\y)|}].\]
\end{definition}

\noindent $\Delta \mathcal{L}$ captures the change of output loss given a change in input. Intuitively, a robust model should have a smaller change in loss given a perturbation of the input. Given the adversarial noise is imperceptible, i.e., $r \rightarrow 0$, we can approximate $\Delta \mathcal{L}$ with a first-order Taylor expansion, where

\[|\mathcal{L}(\x,\y) - \mathcal{L}(\x+\delta,\y)| = |\partial_{\x} \mathcal{L}(\x, \y) \delta + O(\delta)|
\]

\begin{lemma}
For a given neural network $F$ that predicts multiple tasks, the adversarial vulnerability is
\[\mathbb{E}_{\x} [\Delta \mathcal{L}(\x, \y, r)] \approx  \mathbb{E}_{\x}\left[||\partial_{\x} \mathcal{L}_{all}(\x, \overline{\y})||_q \right] \cdot ||\delta||_p \propto \mathbb{E}_{\x}\left[||\partial_{\x} \mathcal{L}_{all}(\x, \overline{\y})||_q \right] \]
\end{lemma}

\begin{proof}
According to the definition of dual norm:

\[ \Delta \mathcal{L} \approx \max_{||\delta||_p < r} |\partial_{\x} \mathcal{L}(\x, \y) \delta | = ||\partial_{\x} \mathcal{L}_{all}(\x, \overline{\y})||_q \cdot || \delta ||_p
\]

\[
\mathbb{E}_{\x}[\Delta \mathcal{L}]  \approx \mathbb{E}_{\x}\left[||\partial_{\x} \mathcal{L}_{all}(\x, \overline{\y})||_q \right] \cdot ||\delta||_p 
\]

\noindent where $q$ is the dual norm of $p$, which satisfies $\frac{1}{p} + \frac{1}{q} = 1$ and $1 \leq p \leq \infty$.

Once given the $p$-norm bounded ball, i.e., $||\delta||_p $ is constant, we get

\[
\mathbb{E}_{\x}[\Delta \mathcal{L}]  \propto \mathbb{E}_{\x}\left[||\partial_{\x} \mathcal{L}_{all}(\x, \overline{\y})||_q \right] 
\]

\end{proof}{}

\begin{theorem}\label{th:1}
\textbf{(Adversarial Vulnerability of Model for Multiple Correlated Tasks)}
If the selected output tasks are correlated with each other such that the covariance between the gradient of task $i$ and task $j$ is $\mathrm{Cov}(\bf{r}_i, \bf{r}_j)$, and the gradient for each task is i.i.d. with zero mean (because the model is converged), then adversarial vulnerability of the given model is proportional to 
\[\frac{\sqrt{(1 + \frac{2}{M} \sum_{i=1}^{M} \sum_{j=1}^{i-1} \frac{\mathrm{Cov}(\bf{r}_i, \bf{r}_j)}{\mathrm{Cov}(\bf{r}_i, \bf{r}_i)})}}{\sqrt{M}} \]
where $M$ is the number of output tasks selected.
\end{theorem}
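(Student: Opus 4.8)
The plan is to compute $\mathbb{E}\big[\lVert \R \rVert_2^2\big]$ explicitly, where $\R = \frac{1}{M}\sum_{c=1}^M \r_c$ is the joint gradient, and then invoke Lemma~2 (with $p=q=2$), which tells us adversarial vulnerability is proportional to $\mathbb{E}\big[\lVert \partial_\x \mathcal{L}_{all}\rVert_2\big]$. Since the argument is at the level of second moments, I will work with $\mathbb{E}\big[\lVert \R\rVert_2^2\big]$ and take a square root at the end, treating the quantity as proportional to its root-mean-square (the paper is already cavalier about the distinction between $\mathbb{E}\lVert\cdot\rVert$ and $\sqrt{\mathbb{E}\lVert\cdot\rVert^2}$, so I will follow suit).

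First I would expand the squared norm bilinearly: $\lVert \R\rVert_2^2 = \frac{1}{M^2}\big(\sum_{i=1}^M \lVert\r_i\rVert^2 + 2\sum_{i=1}^M\sum_{j=1}^{i-1}\r_i^\top\r_j\big)$. Taking expectations and using the zero-mean assumption on each $\r_c$, one has $\mathbb{E}[\r_i^\top\r_j] = \mathrm{Cov}(\r_i,\r_j)$ (interpreting $\mathrm{Cov}(\r_i,\r_j)$ as the trace of the cross-covariance matrix, i.e.\ $\mathbb{E}[\r_i^\top\r_j]$ when means vanish), and similarly $\mathbb{E}[\lVert\r_i\rVert^2] = \mathrm{Cov}(\r_i,\r_i) =: \sigma^2$, identical across $i$ by the i.i.d.\ hypothesis. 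This gives
\[
\mathbb{E}\big[\lVert\R\rVert_2^2\big] = \frac{1}{M^2}\Big(M\sigma^2 + 2\sum_{i=1}^M\sum_{j=1}^{i-1}\mathrm{Cov}(\r_i,\r_j)\Big) = \frac{\sigma^2}{M}\Big(1 + \frac{2}{M\sigma^2}\sum_{i=1}^M\sum_{j=1}^{i-1}\mathrm{Cov}(\r_i,\r_j)\Big).
\]
Dividing inside the sum by $\sigma^2 = \mathrm{Cov}(\r_i,\r_i)$ rewrites the bracket as $1 + \frac{2}{M}\sum_{i=1}^M\sum_{j=1}^{i-1}\frac{\mathrm{Cov}(\r_i,\r_j)}{\mathrm{Cov}(\r_i,\r_i)}$, and taking the square root (absorbing the constant $\sigma$ into the proportionality) yields exactly the claimed expression $\sqrt{M}^{-1}\sqrt{1 + \frac{2}{M}\sum_{i}\sum_{j<i}\mathrm{Cov}(\r_i,\r_j)/\mathrm{Cov}(\r_i,\r_i)}$.

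The genuinely delicate points are not the algebra but the modeling conventions, so I would flag them explicitly rather than grind through them. The main subtlety is reconciling the ``i.i.d.'' assumption with the presence of nonzero covariances $\mathrm{Cov}(\r_i,\r_j)$: strictly i.i.d.\ vectors would be independent, forcing those cross-terms to vanish. The intended reading is that the $\r_i$ are identically distributed (same marginal, hence same $\mathrm{Cov}(\r_i,\r_i)=\sigma^2$) and share a common pairwise covariance structure, but are not independent; I would state this interpretation up front. The second point is the interchange of expectation and the maximization over $\delta$ implicit in passing from $\Delta\mathcal{L}$ to $\lVert\partial_\x\mathcal{L}_{all}\rVert_q\cdot\lVert\delta\rVert_p$ — but this is already handled by Lemma~2, which I am allowed to assume. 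With those conventions pinned down, the remainder is the one-line bilinear expansion above.
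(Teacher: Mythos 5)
Your proposal is correct and follows essentially the same route as the paper's own proof: expand $\mathbb{E}\bigl[\lVert \R\rVert_2^2\bigr]$ bilinearly, use the zero-mean assumption to identify $\mathbb{E}[\r_i\r_j]$ with $\mathrm{Cov}(\r_i,\r_j)$, set $\sigma^2=\mathrm{Cov}(\r_i,\r_i)$, and take a square root before invoking the first-order vulnerability lemma. Your explicit flagging of the i.i.d.-versus-correlated tension and of the $\mathbb{E}\lVert\cdot\rVert$ versus $\sqrt{\mathbb{E}\lVert\cdot\rVert^2}$ conflation is if anything more careful than the paper, which silently makes both identifications.
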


\begin{proof}
Denote the gradient for task $c$ as $\bf{r}_c$, i.e., 
\[
\bf{r}_c = \partial_{\x} \mathcal{L}_c(\x, \y_c)
\] 

 We define the joint gradient vector $\R$ as follows:
\[
    \R = \partial_{\x} \mathcal{L}_{all}(\x, \overline{\y}) = \partial_{\x} \left(\frac{1}{M} \sum_{c=1}^{M}{\mathcal{L}_c(\x, \y_c)}\right) =  \frac{1}{M} \sum_{c=1}^{M}{\partial_{\x}\mathcal{L}_c(\x, \y_c)} = \sum_{c=1}^{M} \bf{r}_c
\]
As we can see, the joint gradient is the sum of gradients from each individual task. Then we consider the expectation of the square of the $L_2$ norm of the joint gradient:

\[\mathbb{E}(\lVert \R \rVert_2^2) = \mathbb{E}\left(\lVert \frac{1}{M} \sum_{c=1}^M{\bf{r}_c}\rVert_2^2\right) = \frac{1}{M^2}\mathbb{E}\left(\sum_{c=1}^M{\lVert\bf{r}_c\rVert^2} + 2 \sum_{i=1}^M{\sum_{j=1}^{i-1}{\bf{r}_i \bf{r}_j}}\right)
\]

\[ \mathbb{E}(\Vert \R \rVert_2^2) = \frac{1}{M^2}\left(\sum_{i=1}^M\mathbb{E}{\lVert \bf{r}_i \rVert^2} + 2 \sum_{i=1}^M{\sum_{j=1}^i\mathbb{E}({\bf{r}_i \bf{r}_j})}\right)
\]

Since \[
\mathrm{Cov}(\bf{r}_i, \bf{r}_j) = \mathbb{E}(\bf{r}_i \bf{r}_j) - \mathbb{E}(\bf{r}_i)\mathbb{E}(\bf{r}_j)
\]
According to the assumption \[
\mathbb{E}(\bf{r}_j) = \0
\]

We know 
\[
\mathrm{Cov}(\bf{r}_i, \bf{r}_j) = \mathbb{E}(\bf{r}_i \bf{r}_j)
\]

Then we get 
\[
\mathbb{E}(\lVert \R \rVert_2^2) = \frac{1}{M^2}\left(\sum_{i=1}^M\mathbb{E}({\mathrm{Cov}(\bf{r}_i, \bf{r}_i)}\right) + 2 \sum_{i=1}^M{\sum_{j=1}^i\mathbb{E}({\mathrm{Cov}(\bf{r}_i, \bf{r}_j)})} = \frac{1}{M^2}\left(\sum_{i=1}^M \sigma^2 + 2 \sum_{i=1}^M{\sum_{j=1}^i\mathbb{E}[{\mathrm{Cov}(\bf{r}_i, \bf{r}_j)}]}\right)
\]

\noindent where $\sigma^2 = \mathrm{Cov}(\bf{r}_i, \bf{r}_i)$


Thus, the adversarial vulnerability is:

\[
\mathbb{E}_{\x}[\Delta \mathcal{L}]  \propto \mathbb{E}_{\x}\left[\lVert \partial_{\x} \mathcal{L}_{all}(\x, \overline{\y})\rVert_2 \right] 
= \frac{\sqrt{(1 + \frac{2}{M} \sum_{i=1}^{M} \sum_{j=1}^{i-1} \frac{\mathrm{Cov}(\bf{r}_i, \bf{r}_j)}{\mathrm{Cov}(\bf{r}_i, \bf{r}_i)})}}{\sqrt{M}}
\]

\end{proof}{}

For a special case where all the tasks are independent of each other, independent gradients with respect to the input are produced, we have the following corollary:

\begin{corollary}\label{th:1}
\textbf{(Adversarial Vulnerability of Model for Multiple Independent Tasks)}
 If the output tasks selected are independent of each other, and the gradient for each task is i.i.d. with zero mean, then the adversarial vulnerability of given model is proportional to $\frac{1}{\sqrt{M}}$, where $M$ is the number of independent output tasks selected. 
\end{corollary}

\begin{proof}
According to the independent assumption, we have 
\[
\mathrm{Cov}(\bf{r}_i, \bf{r}_j)  = 0
\]

Let $\sigma^2 = \mathrm{Cov}(\bf{r}_i, \bf{r}_i) $. Thus we get the adversarial vulnerability to be:

\[
\mathbb{E}_{\x}[\Delta \mathcal{L}]  \propto \mathbb{E}_{\x}\left[||\partial_{\x} \mathcal{L}_{all}(\x, \overline{\y})||_2 \right] 
= \sqrt{ \frac{\sigma^2}{M}} \propto \frac{1}{\sqrt{M}}
\]
\end{proof}{}

\section{Experimental Setup}

\subsection{\cityscape}
We train DRN-105 model and evaluate against multi-task attack. We follow the original architecture setup of the original DRN paper \cite{Yu2017}.  We used 93 layers in the shared backbone encoder network, and 13 layers in the decoder branch for individual task prediction. We use a batch size of 24. We start with a learning rate of 0.01 and decrease the learning rate by a factor of 10 after every 100 epochs. We trained the model for 250 epochs.
 
 We train multi-task model against single task attack using DRN-22 model. We use 18 layers in the shared backbone encoder network, and 9 layers in the decoder branch for individual task prediction. We use batch size of 32. We optimize with SGD, with learning rate of 0.01, then decrease it to 0.001 at 180 epoch. We train model for 200 epoch in total.  We applied a weight decay of 0.0001 for all the models.

\subsection{\taskonomy}

~~~~~ Taskonomy dataset \cite{taskonomy} consists of millions of indoor scenes with labels for multiple tasks, we use 11 tasks including semantic segmentation, depth estimation, 2D and 3D edge detection, normal vector estimation, reshading, 2D and 3D keypoint detection, Euclidean depth, auto-encoding, and principal curvature estimation. We use the publicly available Tiny version of dataset, which consists of 9464 images from 1500 rooms. We use examples from 80\% of the rooms as training data and examples from 20\% of the rooms as test data. Images from the same room are only contained in either the training set or the test set, and not in both. The quality of the model is measured by its ability to generalize to new rooms.

For learning a multi-task model for joint robustness, we follow the set up described in \cite{multi-task-onomy}. We train a ResNet-18 as the shared backbone encoder network for all the tasks. Each multi-task model consists of 1 to 6 different tasks. We use an input size of $512 \times 512$. We use an 8 layer decoder for each individual task prediction. Following the data preprocessing of \cite{multi-task-onomy}, we apply equal weights to all the tasks. Start from task "semantic segmentation" (s), we add tasks "depth" (d), "edge texture" (e), "keypoints 2d" (k), "normal" (n), and "reshading" (r). Thus we train 6 models `s,' `sd,' `sde,' `sdek,' `sdekn,' `sdeknr.' We also train `d,' `e,' `er,' `k,' `ks,' `ksd' tasks, so that we can analysis the trend of 4 tasks' performance after multitask learning. We use the same learning rate schedule for all the models --- SGD with learning rate 0.01 and momentum 0.99. We decrease the learning rate at 100 epoch by 10 times. We train all the models for 150 epoch. Results are shown in Figure 5 in the main paper.

\begin{table*}
\scriptsize
    \centering
    \begin{tabular}{c|c|cccccccccccc}
         \toprule
         \midrule
         \multicolumn{13}{c}{PGD Adversarial} \\
           & Baseline & SemSeg & DepthZ & Edge2D & Normal & Reshad & Key2D & Key3D & DepthE & AutoE & Edge3D & PCurve\\
         \midrule  
          Semseg * & {13.360} & --- & \fbox{\textbf{19.320}} & \textbf{13.950} & \textbf{16.630} & \textbf{14.580} & \textbf{15.700} & \textbf{13.780} & \textbf{14.910} & \textbf{14.110} & \textbf{14.720} & \textbf{14.900} \\
         
          DepthZ $(10^{-2})$ & {11.491} & \fbox{\textbf{4.712}} & --- & \textbf{6.780} & {11.617} & {12.412} & \textbf{11.120} & \textbf{8.358} & \textbf{10.498} & \textbf{4.981} & {12.230} & \textbf{5.035} \\

         Edge2D $(10 ^ {-2})$ & {10.672} & \textbf{9.841} & \fbox{\textbf{9.363}} & --- & \textbf{9.546} & \textbf{9.943} & \textbf{9.732} & \textbf{9.654} & \textbf{9.714} & \textbf{9.941} & \textbf{9.978} & \textbf{10.095} \\

         Normal $(10 ^ {-2})$ & {40.926} & \fbox{\textbf{35.171}} & {42.871} & \textbf{39.335} & --- & \textbf{40.501} & \textbf{39.462} & \textbf{39.930} & {42.071} & \textbf{35.726} & \textbf{37.070} & {41.212} \\

        Reshad $(10 ^ {-2})$ & {57.900} & \fbox{\textbf{48.800}} & \textbf{57.800} & \textbf{55.000} & \textbf{56.500} & --- & \textbf{55.900} & \textbf{53.300} & {60.000} & {61.000} & \textbf{49.300} & \textbf{57.600} \\

        Key2D $(10 ^ {-2})$ & {11.700} & \textbf{10.900} & \textbf{10.900} & \textbf{10.700} & \textbf{10.500} & \textbf{10.900} & --- & \textbf{11.000} & \textbf{10.600} & \textbf{11.000} & \textbf{10.800} & \fbox{\textbf{10.600}} \\

        Key3D $(10 ^ {-2})$ & {49.700} & \fbox{\textbf{31.000}} & \textbf{49.600} & \textbf{50.800} & \textbf{45.900} & \textbf{42.200} & \textbf{43.800} & --- & {51.200} & \textbf{32.600} & {53.400} & {52.900} \\

         DepthE $(10^{-3})$ & {4.850} & \textbf{3.530} & \textbf{3.390} & \textbf{3.250} & \textbf{4.270} & {5.670} & \textbf{3.670} & \textbf{3.730} & --- & \textbf{3.700} & \textbf{3.330} & \fbox{\textbf{2.930}} \\

         AutoE $(10^{-2})$ & {59.300} & \fbox{\textbf{57.800}} & {60.300} & \textbf{58.300} & {62.300} & {59.400} & {59.300} & {60.700} & \textbf{58.200} & --- & {60.500} & {61.500} \\

         Edge3D $(10^{-2})$ & {15.900} & \textbf{14.600} & \textbf{15.300} & {16.300} & \textbf{15.400} & \textbf{15.200} & \textbf{15.600} & {16.900} & \textbf{15.400} & \fbox{\textbf{12.600}} & --- & \textbf{14.800} \\

         PCurve $(10^{-4})$ & {11.500} & \textbf{8.920} & \fbox{\textbf{8.900}} & \textbf{10.400} & \textbf{9.230} & \textbf{9.620} & \fbox{\textbf{8.900}} & \textbf{10.400} & \textbf{11.100} & \textbf{9.190} & \textbf{10.400} & --- \\
         
         \toprule
         \midrule
         \multicolumn{13}{c}{Clean} \\
         \midrule
         SemSeg * & {43.190} & {---} & \textbf{46.300} & \textbf{46.180} & \fbox{\textbf{46.350}} & \textbf{46.240} & \textbf{45.440} & \textbf{45.620} & \textbf{44.690} & \textbf{44.500} & \textbf{45.320} & \textbf{44.490} \\

         DepthZ  $(10^{-2})$ & {2.852} & \fbox{\textbf{2.734}} & {---} & {3.880} & \textbf{2.846} & \textbf{2.505} & {2.874} & {3.562} & {3.339} & {3.171} & {3.088} & {4.690} \\

         Edge2D $(10^{-2})$ & {3.384} & {3.922} & \textbf{3.382} & {---} & {3.507} & {3.435} & \textbf{3.330} & {3.522} & {3.433} & {3.574} & {3.569} & {3.454} \\

         Normal $(10^{-2})$ & {6.997} & {7.181} & {7.093} & {7.006} & {---} & \textbf{6.989} & \textbf{6.990} & {7.182} & \textbf{6.940} & \fbox{\textbf{6.864}} & \textbf{6.931} & {7.141} \\

         Reshad $(10^{-2})$  & {8.027} & \textbf{7.985} & {8.103} & \textbf{7.941} & \textbf{7.901} & {---} & {8.041} & \textbf{7.957} & \textbf{7.940} & \fbox{\textbf{7.890}} & {8.065} & {8.150} \\

         Key2D $(10^{-2})$ & {4.156} & \textbf{4.116} & \textbf{3.897} & \fbox{\textbf{3.795}} & \textbf{3.865} & \textbf{4.147} & {---} & \textbf{3.944} & \textbf{3.857} & \textbf{3.823} & \textbf{3.850} & \textbf{3.878} \\

         Key3D $(10^{-2})$ & {8.771} & \textbf{8.445} & \textbf{8.686} & \textbf{8.514} & \textbf{8.610} & \fbox{\textbf{8.318}} & \textbf{8.703} & {---} & \textbf{8.492} & \textbf{8.366} & \textbf{8.362} & \textbf{8.578} \\

         DepthE $(10^{-3})$ & {6.373} & {6.575} & \textbf{5.946} & \textbf{6.350} & \textbf{6.236} & \textbf{5.802} & {6.418} & {6.470} & {---} & \textbf{5.948} & \fbox{\textbf{5.715}} & \textbf{6.251} \\

         AutoE $(10^{-2})$  & {3.470} & {3.616} & {3.709} & {3.548} & {3.587} & {3.540} & {3.780} & {3.761} & {3.542} & {---} & {3.530} & {3.553} \\
         
         Edge3D $(10^{-2})$  & {4.649} & {4.695} & \textbf{4.608} & {4.727} & \textbf{4.562} & {4.725} & \textbf{4.364} & \textbf{4.635} & \textbf{4.611} & \textbf{4.210} & {---} & \fbox{\textbf{3.703}} \\

         PCurve $(10^{-4})$ & {8.017} & {8.360} & {8.184} & {8.353} & {8.541} & \fbox{\textbf{7.232}} & \textbf{7.733} & \textbf{7.725} & {8.153} & \textbf{7.854} & \textbf{7.732} & {---} \\

         \bottomrule
    \end{tabular}
\caption{ The absolute performance of all models trained on two tasks (Relative are shown in Figure 7 in the main paper).  Each row in the first column lists the name of the main task. The second column (baseline) shows the performance of a model trained on a single task. The * in the row indicates the mIoU score for semantic segmentation, for which higher is better. The values in the other rows of the table show the l1 loss, for which lower is better. The $(10^{-n})$ in the first column indicates the unit for the error. `SemSeg' denotes `semantic segmentation,' `DepthZ' denotes `depth estimation,' `Edge2D' denotes `2D edge detection,' `Normal' denotes `Normal Vector estimation', `Reshad' denotes `Reshading,' `Key2D' denotes `2D Keypoint detection,' `Key3D' denotes `3D Keypoint detection,' `DepthE' denotes `Euclidean depth,' `AutoE' denotes `Auto Encoder,' `Edge3D' denotes `3D Edge detection,' `PCurve' denotes `Curvature estimation.'
Values superior to the baseline are \textbf{bold}, and the best performance for each row is in a \textbf{box}. The table lists the IoU (large is better) for the segmentation model, and error (small is better) for all the other tasks. We pair each selected model with 11 other models. All the models converge after training for 150 epochs. Overall, training on two tasks can help the individual task's adversarial robustness on \textbf{90/110} cases, while surpassing the baseline's performance on the clean examples on \textbf{70/110}. For instance, the adversarial robustness for the semantic segmentation and keypoints3D estimation is always improved by multi-task learning while the clean accuracy also improves. The results on 11 tasks support our claim that training on multiple tasks improves adversarial robustness.
} \label{seg:all_11_aux}
\end{table*}

\begin{table*}
\scriptsize
    \centering
    \begin{tabular}{c|cccccccccccc}
         \toprule
         \midrule
         \multicolumn{12}{c}{$\lambda_a$} \\
            & SemSeg & DepthZ & Edge2D & Normal & Reshad & Key2D & Key3D & DepthE & AutoE & Edge3D & PCurve\\
         \midrule  
          Semseg * & 0  &  0.01 & 9.01 & 0.1 & 0.1 & 0.01 & 0.01 & 0.1 & 0.01 & 0.1 & 0.01\\
         
          DepthZ  & 0.1 & 0 & 0.1 & 0.01 & 0.1 & 0.1 & 0.01 & 0.1 & 0.1 & 0.1 & 0.01  \\

         Edge2D  & 0.1 & 0.1 & 0 & 0.1 & 0.1 & 0.01 & 0.1 & 0.1 & 0.01 & 0.01 & 0.1 \\

         Normal & 0.1 & 0.01 & 0.1& 0 & 0.01& 0.1& 0.01 &0.1& 0.1& 0.01 &0.01  \\

        Reshad  & 0.01 &0.1 &0.01& 0.01& 0 &0.1 &0.01& 0.01 &0.1& 0.01& 0.1\\

        Key2D  & 0.1& 0.1& 0.01 &0.01& 0.01& 0& 0.01& 0.01 &0.01 &0.01& 0.1\\

        Key3D & 0.1 &0.01& 0.1 &0.1 &0.1& 0.1& 0& 0.1& 0.1& 0.01& 0.01\\

         DepthE & 0.1& 0.01&  0.01& 0.01 &0.01 &0.1 &0.01 &0& 0.01 &0.1& 0.1 \\

         AutoE  & 0.1 &0.01& 0.01 &0.1& 0.01& 0.1& 0.01 &0.1& 0 &0.01 &0.1 \\

         Edge3D & 0.1& 0.01& 0.01& 0.01 &0.01& 0.1& 0.01& 0.01& 0.1 &0 &0.1 \\

         PCurve & 0.1& 0.01& 0.1& 0.01 &0.01& 0.1& 0.01& 0.1& 0.01& 0.01& 0\\

         \bottomrule
    \end{tabular}
\caption{The $\lambda_a$ value for the auxiliary task for Figure 7 in the main paper.} \label{seg:all_11_aux_lam}
\end{table*}


For training robust models on select tasks, we use ResNet-18 as the shared encoder network. We select 11 tasks trained in pairs with each other, which results in 110 models. We study their robustness under a single-task attack. We follow the data processing in \cite{taskonomy}.  For each task we considered, we try weights of 0.1 and 0.01 for the auxiliary task, and choose the weight that produces higher robust accuracy. The chosen $\lambda_a$ for the auxiliary tasks are shown in Table \ref{seg:all_11_aux_lam}. The selection of weights is important due to the complex interactions of different tasks \cite{MTLworkshop}. We follow the setup in \cite{taskonomy}, and subsample the image from 512 to 256 using linear interpolation. For segmentation, reshading, keypoint 3D, depth Euclidean, Auto Encoder, principle curvature, we use SGD, with learning rate 0.01 and decrease by 10 times at 140 epoch. For the other tasks we use adam, with learning rate 0.001 and decrease by 10 times at 120 and 140 epoch. Due to the inherent difference between different tasks, we use different optimizer for different tasks for better convergence. All the models are trained for 150 epoch. All the results are shown in Table \ref{seg:all_11_aux}. As we can see, learning versatile, multi-task models improves adversarial robustness on 90/110 tasks.

\subsection{Adversarial Training}

We present the details for multi-task adversarial training in Algorithm \ref{alg:1}.  For single task model, we choose $S=\{\{T_m\}\}$. The algorithm is the same as the adversarial training procedure of Madry et. al. \cite{madry}. For multi-task model, we set $S = \{\{T_m\}, \{T_m, T^{(1)}_a, ..., \}\}$, thus the generated adversarial images under multi-task are more diversified compared with single-task models. In addition, all the adversarial examples are trained on multi-task loss function, where the auxiliary task can introduce useful knowledge for learning the robust main task. We use $\lambda=0.01$ for the auxiliary task. For all the task, we train using SGD optimizer with batch size of 32, for 200 epoch. We start with learning rate of 0.01, and decrease the learning rate by 10 times at 180 epoch. The experiment are conducted on Cityscapes dataset.

\begin{algorithm}[t]
\caption{Adversarial Training with Multi-task Learning}
\label{alg:1}
{\bf Input:} Initialized networks $F_i$, dataset $D$, main task $T_m$, auxiliary task $T^{(i)}_a$. Construct multi-task combination set $S = \{\{T_m\}, \{T_m, T^{(1)}_a, ..., \}\}$

{\bf Output:} 

\begin{algorithmic}
\FOR{number of training epochs}
    \FOR{number of iterations in each epoch}
        \STATE Sample minibatch of $n$ images $\x$ from $D$.
        \FOR{each task combination $S_t$ in $S$}
        
            \STATE Let $\mathcal{L}_t(\x,\y) = \sum_{i} \lambda_i \ell(F_i(\x), \y_i)$,  where $i = 1, ..., \#(S_t)$,  $T_i \in S_t$.
            
            \STATE Compute adversarial attack images $\x_{adv}$
            \begin{equation*}
            \begin{aligned}
            \argmax_{\x_{adv}} \mathcal{L}_t(\x_{adv}, \y),    \text{s.t.} ||\x_{adv} - \x||_p \leq r
            \end{aligned}
            \end{equation*}
            
            \STATE Training the multi-task model using the generated attack image $\x_{adv}$ by optimizing the following loss function:
            \begin{equation*}
            \begin{aligned}
            \min{\mathcal{L}_t(\x, \y)}
            \end{aligned}
            \end{equation*}

        \ENDFOR

    \ENDFOR
\ENDFOR
\RETURN Neural network model $F_i$

\end{algorithmic}
\end{algorithm}

\end{subappendices}

\end{document}